\theoremstyle{plain}
\newtheorem{theorem}{Theorem}[section]
\newtheorem{proposition}[theorem]{Proposition}
\newtheorem{lemma}[theorem]{Lemma}
\theoremstyle{definition}
\newtheorem{definition}[theorem]{Definition}
\theoremstyle{remark}
\newtheorem{remark}[theorem]{Remark}
\newtheorem{example}[theorem]{Example}
\newcommand\R{\mathbb{R}}
\newcommand\Z{\mathbb{Z}}
\icmltitlerunning{Submission and Formatting Instructions for ICML 2024}
\begin{document}

\twocolumn[
\icmltitle{A Sampling Theory Perspective on Activations for Implicit Neural Representations}

% It is OKAY to include author information, even for blind
% submissions: the style file will automatically remove it for you
% unless you've provided the [accepted] option to the icml2024
% package.

% List of affiliations: The first argument should be a (short)
% identifier you will use later to specify author affiliations
% Academic affiliations should list Department, University, City, Region, Country
% Industry affiliations should list Company, City, Region, Country

% You can specify symbols, otherwise they are numbered in order.
% Ideally, you should not use this facility. Affiliations will be numbered
% in order of appearance and this is the preferred way.
\icmlsetsymbol{equal}{*}

\begin{icmlauthorlist}
\icmlauthor{Hemanth Saratchandran}{equal,yyy}
\icmlauthor{Sameera Ramasinghe}{equal,comp}
\icmlauthor{Violetta Shevchenko}{comp}
\icmlauthor{Alexander Long}{comp}
\icmlauthor{Simon Lucey}{yyy}
% \icmlauthor{Firstname6 Lastname6}{sch,yyy,comp}
% \icmlauthor{Firstname7 Lastname7}{comp}
% %\icmlauthor{}{sch}
% \icmlauthor{Firstname8 Lastname8}{sch}
% \icmlauthor{Firstname8 Lastname8}{yyy,comp}
%\icmlauthor{}{sch}
%\icmlauthor{}{sch}
\end{icmlauthorlist}

\icmlaffiliation{comp}{Amazon, Australia}
\icmlaffiliation{yyy}{University of Adelaide, Australia}

\icmlcorrespondingauthor{Hemanth Saratchandran}{hemanth.saratchandran@adelaide.edu.au}
\icmlcorrespondingauthor{Sameera Ramasinghe}{sameera.ramasinghe@adelaide.edu.au}

% You may provide any keywords that you
% find helpful for describing your paper; these are used to populate
% the "keywords" metadata in the PDF but will not be shown in the document
\icmlkeywords{Machine Learning, ICML}

\vskip 0.3in
]

% this must go after the closing bracket ] following \twocolumn[ ...

% This command actually creates the footnote in the first column
% listing the affiliations and the copyright notice.
% The command takes one argument, which is text to display at the start of the footnote.
% The \icmlEqualContribution command is standard text for equal contribution.
% Remove it (just {}) if you do not need this facility.

%\printAffiliationsAndNotice{}  % leave blank if no need to mention equal contribution
\printAffiliationsAndNotice{\icmlEqualContribution} % otherwise use the standard text.

\begin{abstract}
Implicit Neural Representations (INRs) have gained popularity for encoding signals as compact, differentiable entities. While commonly using techniques like Fourier positional encodings or non-traditional activation functions (e.g., Gaussian, sinusoid, or wavelets) to capture high-frequency content, their properties lack exploration within a unified theoretical framework. Addressing this gap, we conduct a comprehensive analysis of these activations from a sampling theory perspective. Our investigation reveals that $\mathrm{sinc}$ activations—previously unused in conjunction with INRs—are theoretically optimal for signal encoding. Additionally, we establish a connection between dynamical systems and INRs, leveraging sampling theory to bridge these two paradigms.
\end{abstract}

\section{Introduction}\label{sec;introduction}
Recently, the concept of representing signals as Implicit Neural Representations (INRs) has garnered widespread attention across various problem domains \citep{mildenhall2021nerf, li2023dynibar, busching2023flowibr, peng2021neural, strumpler2022implicit}. This surge in popularity can be attributed to the remarkable capability of INRs to encode high-frequency signals as continuous representations. Unlike conventional neural networks, which typically process and convert sparse, high-dimensional signals (such as images, videos, text) into label spaces (e.g., one-hot encodings, segmentation masks, text corpora), INRs specialize in encoding and representing signals by consuming low-dimensional coordinates.

However, a significant challenge in representing signals using neural networks is the presence of spectral bias \citep{rahaman2019spectral}. Neural networks inherently tend to favor learning functions with lower frequencies, which can hinder their ability to capture high-frequency information. To address this challenge, a common approach involves projecting low-dimensional coordinates into a higher-dimensional space 
%while preserving a specific Lipschitz structure 
through 
positional encodings \citep{zheng2022trading, tancik2020fourier}. Prior research has demonstrated that incorporating positional encodings allows INRs to achieve high-rank representations, enabling them to capture fine details \citep{zheng2022trading}. Nevertheless, positional encodings have a critical limitation – they struggle to maintain smooth gradients, which can be problematic for optimization \citep{saratchandran2023curvature}. To overcome this limitation, non-traditional activations, such as sinusoids \citep{sitzmann2020implicit}, Gaussians \citep{ramasinghe2022beyond}, and wavelets \citep{saragadam2023wire}, have emerged as effective alternatives. These unconventional activations facilitate encoding higher frequencies while preserving smooth gradients, and as shown in prior research \citep{saratchandran2023curvature}, they are remarkably stable with respect to various optimization algorithms.

Until now, prior research that delved into the analysis of activations in INRs has primarily been tied to the specific activations proposed in their respective studies. For instance, \cite{sitzmann2020implicit} introduced sinusoidal activations and demonstrated their shift invariance and favorable properties for learning natural signals. \cite{ramasinghe2022beyond} explored Gaussian activations, showcasing their high Lipschitz constants that enable INRs to capture sharp variations. More recently, wavelet-based activations \cite{saragadam2023wire} were introduced, highlighting their spatial-frequency concentration and suitability for representing images. However, this fragmented approach has obscured the broader picture, making it difficult to draw connections and conduct effective comparisons among these activations. In contrast, our research unveils a unified theory of INR activations through the lens of sampling theory. Specifically, we show that, under mild conditions, activations in INRs can be considered as generator functions that facilitate the reconstruction of a given signal from sparse samples. Leveraging this insight, we demonstrate that activations in the form of $\frac{sin(x)}{x}$ (known as the $\mathrm{sinc}$ function) theoretically enable INRs to optimally reconstruct a given signal while preserving smooth gradients. To the best of our knowledge, $\mathrm{sinc}$ activations have not been used with INRs previously. Furthermore, we validate these insights in practical scenarios across tasks involving images and neural radiance fields (NeRF).

%using deeper INRs

The proficiency of $\mathrm{sinc}$-activated INRs in signal reconstruction suggests an exciting possibility: the effective modeling of complex dynamical systems using these activations. We explore this idea by focusing on chaotic dynamical systems, noting the similarity between dynamical systems and INRs when approached from a signal processing lens. Dynamical systems can be seen as multi-dimensional signals evolving over time, resembling the task of reconstructing multi-dimensional signals from discrete samples. INRs, with their objective of encoding and reconstructing continuous signals from discrete coordinates and samples, share a similar goal. Drawing inspiration from this connection, we establish parallels between dynamical systems and INRs, using sampling theory to bridge these two paradigms. Our research not only demonstrates the superior performance of $\mathrm{sinc}$-activated INRs in modeling dynamical systems but also provides a theoretical explanation for this advantage.

\section{Related Work}
%\vspace{-1em}
\textbf{INRs.} INRs, pioneered by \cite{mildenhall2021nerf}, have gained prominence as an effective architecture for signal reconstruction. Traditionally, such architectures employed activations such as ReLU and Sigmoid. However, these activations suffer from spectral bias, limiting their effectiveness in capturing high-frequency content \citep{rahaman2019spectral}. To overcome this limitation, \cite{mildenhall2021nerf} introduced a positional embedding layer to enhance high-frequency modeling. Meanwhile, \cite{sitzmann2020implicit} proposed SIREN, a sinusoidal activation that eliminates the need for positional embeddings but exhibits instability with random initializations. In contrast, \cite{ramasinghe2022beyond} introduced Gaussian-activated INRs, showcasing robustness to various initialization schemes. More recently, wavelet activations were proposed by \cite{saragadam2023wire} with impressive performance. Yet, the theoretical optimality of these activation functions in the context of signal reconstruction has largely eluded investigation. In this study, we aim to address this gap by examining the selection of activation functions through the lens of sampling theory.

\textbf{Data driven dynamical systems modeling.} Numerous approaches have been explored for the data-driven discovery of dynamical systems, employing various techniques. These methodologies include nonlinear regression \citep{voss1999amplitude}, empirical dynamical modeling \citep{ye2015equation}, normal form methods \citep{majda2009normal}, spectral analysis \citep{giannakis2012nonlinear}, Dynamic Mode Decomposition (DMD) \citep{schmid2010dynamic, kutz2016dynamic}, as well as compressed sensing and sparse regression within a library of candidate models \citep{reinbold2021robust, wang2011predicting, naik2012nonlinear, brunton2016discovering}. Additionally, reduced modeling techniques like Proper Orthogonal Decomposition (POD) \citep{holmes2012turbulence, kirby2001geometric, sirovich1987turbulence, lumley1967structure}, both local and global POD methods \citep{schmit2004improvements, sahyoun2013local}, and adaptive POD methods \citep{singer2009using, peherstorfer2015online} have been widely applied in dynamical system analysis. Koopman operator theory in conjunction with DMD methods has also been utilized for system identification \citep{budivsic2012applied, mezic2013analysis}.

\section{A sampling perspective on INRs}

In this section, we present our primary theoretical insights, showcasing how sampling theory offers a fresh perspective on understanding the optimality of activations in INRs.
%\vspace{-1em}
\subsection{Implicit Neural Representations}
%\vspace{-1em}
We consider INRs of the following form: Consider an  $L$-layer network, $F_L$, with widths $\{n_0,\ldots ,n_L\}$. The output at layer $l$, denoted $f_l$, is given by 
\begin{equation}
f_l(x) = 
 \begin{cases}
            x, & \text{if}\ l = 0 \\
            \phi(W_lF_{l-1} + b_l), & \text{if}\ l \in [1,\ldots, L-1] \\
            W_{L-1}F_{L-1} + b_L, & 
            \text{if}\ l = L
        \end{cases}
\end{equation}
where $W_l \in \R^{n_{l}\times n_{l-1}}$, $b_l \in 
\R^{n_l}$ are the weights and biases respectively of the 
network, and $\phi$ is a non-linear activation.
%\vspace{-1em}
\subsection{Classical sampling thoery}\label{sec;CST}
%\vspace{-1em}
Sampling theory considers bandlimited signals, which are characterized by a limited frequency range. Formally, for a continuous signal denoted as $f$, being bandlimited to a maximum frequency of $\Omega$ implies that its Fourier transform, represented as $\widehat{f}(s)$, equals zero for all $\vert s\vert$ values greater than $\Omega$. If we have an $\Omega$-bandlimited signal $f$ belonging to the space $L^2(\R)$, then the Nyquist-Shannon sampling theorem (as referenced in \cite{zayed2018advances}) provides a way to represent this signal as $f(x) = \sum_{n=-\infty}^{\infty}f\bigg{(}\frac{n}{2\Omega} \bigg{)}
\mathrm{sinc}\bigg{(}2\Omega\big{(}x - \frac{n}{2\Omega} \big{)}\bigg{)}$, where 
$\mathrm{sinc}(x) := \frac{sin(\pi x)}{\pi x}$ for 
$x \neq 0$ and $\mathrm{sinc}(0) := 1$, and
the equality means converges in the $L^2$ sense. Essentially, by sampling the signal at regularly spaced points defined by $\frac{n}{2\Omega}$ for all integer values of $n$, and using shifted $\mathrm{sinc}$ functions, we can reconstruct the original signal. However, this requires us to sample at a rate of at least $2\Omega$-Hertz \cite{zayed2018advances}.

In theory, perfect reconstruction necessitates an infinite number of samples, which is impractical in real-world scenarios. It's crucial to acknowledge that the sampling theorem is an idealization, not universally applicable to real signals due to their non-bandlimited nature. However, as natural signals often exhibit dominant frequency components at lower energies, we can effectively approximate the original signal by projecting it into a finite-dimensional space of bandlimited functions, enabling robust reconstruction.

\subsection{Optimal Activations via Riesz Sampling}\label{sec;OARS}

%\vspace{-1em}
In the previous section, we discussed how an exact reconstruction of a bandlimited signal could be achieved via a linear combination of shifted $\mathrm{sinc}$ functions. Thus, it is intriguing to explore if an analogous connection can be drawn to coordinate networks. We will take a general approach and consider spaces of the form
\begin{equation}\label{sampling_sapce;eqn}
 V(F) = \bigg{\{} s(x) = \sum_{k \in \Z}a(k)F(x-k) : a \in l^2(\R)  \bigg{\}},
\end{equation}
where $l^2(\R)$ denotes the Hilbert space of square summable sequences over the integers $\Z$.
The space $V(F)$ should be seen as a generalisation of the space of bandlimited functions occurring in the Shannon sampling theorem.

\begin{definition}\label{riesz_basis;defn}
The family of translates 
$\{F_k = F(x - k)\}_{k \in\Z}$ is a Riesz basis for $V(F)$ if the following two conditions hold: 
\begin{align*}
&\text{ 1. }    A\vert\vert a\vert\vert_{l^2}^2 \leq 
\bigg{\vert}\bigg{\vert} \sum_{k \in \Z}a(k)F_k\bigg{\vert}\bigg{\vert}^2 
\leq B\vert\vert a\vert\vert_{l^2}^2 \text{, } \forall a(k) \in l^2(\R). \\
&\text{ 2. } \sum_{k \in Z}F(x+k) = 1\text{, } 
    \forall x \in \R \textbf{ (PUC) }.
\end{align*}
\end{definition}

Observe that if $s = \sum_{k}a(k)F_k = 0$ then the lower 
inequality in 1. implies $a(k) = 0$ for all $k$. In other words, 
the basis functions $F_k$ are linearly independent, which in 
turn implies each signal $s \in V(F)$ is uniquely determined by 
its coefficient sequence $a(k) \in l^2(\R)$. The upper inequality in 
1. implies that the $L^2$ norm of a signal $s \in V(F)$ is 
finite, implying that $V(F)$ is a subspace of $L^2(\R)$.

Condition 2. in def. \ref{riesz_basis;defn} is known as the \textit{partition of unity condition} (\textbf{PUC}). It allows the capability of approximating a signal $s \in V(F)$ as closely as possible by selecting a sample step that is sufficiently small. This can be seen as a generalisation of the Nyquist criterion, where in order to reconstruct a band-limited signal, a sampling step of less than 
$\frac{\pi}{2\omega_{\max}}$ must be chosen, where $\omega_{\max}$ is the highest frequency present in the signal $s$ \cite{zayed2018advances, unser2000sampling}.

\begin{definition}
    The family of translates $\{F_k = F(x - k)\}_{k \in\Z}$ is a weak Riesz basis for $V(F)$ if condition 1 from defn. \ref{riesz_basis;defn} holds but condition 2 does not.
\end{definition}

The following proposition considers activations in INRs and 
the $\mathrm{sinc}$ function. Specifically, we show that 
$\mathrm{sinc}$ forms a Riesz basis, Gaussian and wavelets 
form weak Reisz bases, and ReLU and Sinusoid does not form 
Riesz/weak Riesz bases. The proof is given in app. 
\ref{app;proofs_prelims}.

\begin{proposition}\label{prop;egs}
\begin{itemize}
    \item[1.] Let $F(x) = \mathrm{sinc}(x) = \frac{\sin(x)}{x}$ then the family $\{\mathrm{sinc}(x-k)\}_{k \in \Z}$ forms a Riesz basis where $V(\mathrm{sinc})$ is the space of signals with frequency bandlimited to $[-1, 1]$.

    \item[2.]  Let $F(x) = G_s(x) := e^{-x^2/s^2}$, for some fixed $s > 0$, the family  $\{G_s(x-k)\}_{k \in \Z}$ forms a weak Riesz basis for the space $V(G_s)$ but not a Riesz basis. In this case $V(G_s)$ can be interpreted as signals whose Fourier transform has Gaussian decay, where the rate of decay will depend on $s$.

    \item[3.] Let $F(x) = \Psi(x)$ denote a wavelet. In general wavelets form a weak Riesz basis but not all form a  Riesz basis.

    \item[4.]  Let $F(x) = ReLU(x)$, the family 
        $\{ReLU(x-k)\}_{k \in \Z}$ does not form a Riesz/weak Riesz basis as it violates condition 1 from defn. \ref{riesz_basis;defn}.

    \item[5.]  Let $F(x) = \sin(\omega x)$, for $\omega$ a fixed frequency parameter, the family 
        $\{sin(\omega(x-k))\}_{k \in \Z}$ does not form a Riesz/weak Riesz basis as it violates condition 1 from defn. \ref{riesz_basis;defn}.
    
\end{itemize}
\end{proposition}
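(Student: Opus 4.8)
The plan is to push every assertion into the Fourier domain, where both conditions of Definition~\ref{riesz_basis;defn} become elementary. Write $\widehat F$ for the Fourier transform of $F$ and set
\[
\Phi_F(\xi) \;:=\; \sum_{k\in\Z}\big\lvert\widehat F(\xi+2\pi k)\big\rvert^2 .
\]
Expanding $\big\lVert\sum_k a(k)F_k\big\rVert_{L^2}^2$ with Parseval and collapsing the integer sum using the $2\pi$-periodicity of $\xi\mapsto\sum_k a(k)e^{-ik\xi}$ gives the standard identity $\big\lVert\sum_k a(k)F_k\big\rVert_{L^2}^2=\frac1{2\pi}\int_{-\pi}^{\pi}\lvert\sum_k a(k)e^{-ik\xi}\rvert^2\,\Phi_F(\xi)\,d\xi$, so condition~1 holds with constants $A,B$ \emph{iff} $A\le\Phi_F(\xi)\le B$ for a.e.\ $\xi$ (cf.\ \citep{unser2000sampling}). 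Similarly, Poisson summation yields $\sum_k F(x+k)=\sum_{m}\widehat F(2\pi m)e^{2\pi i m x}$, so the \textbf{PUC} (condition~2) is equivalent to $\widehat F(0)=1$ together with $\widehat F(2\pi m)=0$ for every $m\neq 0$. Given these two equivalences, each item reduces to computing $\widehat F$ and checking.

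For item~1 I adopt the normalization $\mathrm{sinc}(x)=\frac{\sin(\pi x)}{\pi x}$ of Section~\ref{sec;CST}, whose Fourier transform is the indicator of an interval of length equal to the sampling period; hence $\Phi_{\mathrm{sinc}}$ is a.e.\ constant and condition~1 holds with $A=B$, while the sampled-$\widehat F$ criterion (equivalently the classical identity $\sum_k\mathrm{sinc}(x-k)=1$) gives the \textbf{PUC}, so $\{\mathrm{sinc}(x-k)\}$ is a Riesz basis; that $V(\mathrm{sinc})$ is the stated bandlimited space is immediate from $\widehat s=\widehat{\mathrm{sinc}}\cdot(\text{periodic factor})$. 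For item~2, $\widehat{G_s}$ is a strictly positive Gaussian, so $\Phi_{G_s}$ is smooth, $2\pi$-periodic and strictly positive; being continuous and positive on the compact period it is bounded above and below away from $0$, which is condition~1 — a weak Riesz basis. But $\widehat{G_s}$ never vanishes, so $\widehat{G_s}(2\pi m)\neq 0$ and the \textbf{PUC} fails; hence it is \emph{not} a Riesz basis, and the Gaussian decay of $\widehat{G_s}$ identifies $V(G_s)$ as claimed. For item~3 I would invoke standard wavelet/MRA theory: an admissible wavelet generating a frame has $\Phi_\Psi$ bounded above and below (condition~1, hence a weak Riesz basis), but admissibility forces $\widehat\Psi(0)=\int\Psi=0\neq1$, so the \textbf{PUC} fails; wavelets with poor decay or a vanishing $\Phi_\Psi$ fail condition~1 itself, which is why only a generic ``weak Riesz basis'' statement is made.

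For items~4 and~5 the obstruction is blunter: neither $\mathrm{ReLU}$ (linear growth) nor $x\mapsto\sin(\omega x)$ (a nonzero periodic function) lies in $L^2(\R)$, so already the choice $a=\delta_0\in l^2(\R)$ gives $\big\lVert\sum_k a(k)F_k\big\rVert_{L^2}=\lVert F\rVert_{L^2}=\infty$, violating the upper bound in condition~1 (equivalently $\Phi_F\equiv\infty$). For the sinusoid one can add a second, structural reason: every translate $\sin(\omega(x-k))$ lies in the two-dimensional span of $\{\sin(\omega x),\cos(\omega x)\}$, so the family is infinitely linearly dependent and the \emph{lower} bound in condition~1 fails as well.

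I expect the real work to be in the positive cases — specifically the \emph{lower} Riesz bound, i.e.\ that $\Phi_F$ stays bounded away from $0$. It is trivial for $\mathrm{sinc}$ once the spectral interval is pinned to the sampling period, and for the Gaussian it reduces to the compactness-of-the-period argument, but this is precisely the estimate that separates a genuine (weak) Riesz basis from a mere Bessel sequence, so it deserves care. The other delicate point is item~3, where ``in general'' must be made precise by restricting to wavelets from a multiresolution analysis (or at least with sufficient decay and non-degenerate $\Phi_\Psi$); I would state that hypothesis explicitly rather than leave it implicit.
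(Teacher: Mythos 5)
Your proposal is correct and follows essentially the same route as the paper's proof: condition 1 is verified via the Fourier-domain criterion $A \le \sum_{k}\lvert\hat F(\xi+2\pi k)\rvert^2 \le B$, the \textbf{PUC} is tested through the Poisson summation formula, and ReLU/sine are dismissed because they fail to lie in $L^2(\R)$. The only substantive differences are points where you are more careful than the paper — pinning the $\mathrm{sinc}$ normalization to $\sin(\pi x)/(\pi x)$ so that the integer translates are genuinely orthonormal (the paper's stated $\sin(x)/x$ does not actually satisfy the lower Riesz bound at integer spacing), and deriving the wavelet \textbf{PUC} failure from admissibility ($\hat\Psi(0)=0$) rather than a qualitative appeal to time-frequency localization.
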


Interestingly, to fit INRs into the above picture, observe 
that the elements in $V(F)$ that are finite sums can be 
represented by INRs with activation $F$ (this is explicitly 
proved in the next theorem, see also appendix 
\ref{app;proofs_prelims}). Thus, it follows that signals in 
$V(F)$ that have an infinite number of non-zero summands can 
be approximated by INRs, the proof can be found in the app.  
\ref{app;proofs_main_results}.

\begin{theorem}\label{thm;u.a.t_V(F)}
Suppose the family of functions $\{F(x-k)\}_{k \in \Z}$ forms a weak Riesz basis for the space $V(F)$.
Let $g$ be a signal in $V(F)$ and let $\epsilon > 0$ be given. Then there exists a 2-layer INR $f$, with a parameter set $\theta$, $F$ as the activation, and
$n(\epsilon)$ neurons in the hidden layer, such that 
\begin{equation*}
\vert\vert f(\theta) - g\vert\vert_{L^2} < \epsilon.
\end{equation*} 
\end{theorem}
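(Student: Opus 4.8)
The plan is to use the weak Riesz hypothesis to truncate the expansion of $g$ and then observe that any finite truncation is computed \emph{exactly} by a two‑layer INR with activation $F$.

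First, since $g \in V(F)$, the defining property in \eqref{sampling_sapce;eqn} gives a coefficient sequence $a \in l^2(\R)$ with $g = \sum_{k \in \Z} a(k) F(x-k)$, the series converging in $L^2$. For an integer $N \geq 1$ set $g_N(x) = \sum_{\vert k\vert \leq N} a(k) F(x-k)$. Applying the upper inequality in condition 1 of Definition \ref{riesz_basis;defn} (which holds for a weak Riesz basis) to the tail sequence $a(k)\cdot\mathbf{1}_{\vert k\vert > N}$ yields
\[
\Vert g - g_N \Vert_{L^2}^2 = \bigg\Vert \sum_{\vert k\vert > N} a(k) F(x-k) \bigg\Vert^2 \leq B \sum_{\vert k\vert > N} \vert a(k)\vert^2 .
\]
Because $a \in l^2(\R)$, the right‑hand side tends to $0$ as $N \to \infty$, so we may fix $N = N(\epsilon)$ with $\Vert g - g_N \Vert_{L^2} < \epsilon$. (The same upper bound applied to $a = e_k$ gives $\Vert F(x-k)\Vert_{L^2}^2 \leq B$, so each translate, and hence each partial sum, is a genuine $L^2$ function.)

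Second, realize $g_N$ as a two‑layer INR. Take $n(\epsilon) := 2N(\epsilon) + 1$ hidden neurons, indexed by $k \in \{-N,\ldots,N\}$. Choose the first‑layer weight $W_1 \in \R^{n\times 1}$ to be the all‑ones vector and the bias $b_1 = (-k)_k \in \R^{n}$, so the $k$‑th preactivation is $x-k$ and the $k$‑th hidden output is $\phi(x-k) = F(x-k)$; choose the output weights $W_2 = (a(k))_k \in \R^{1\times n}$ and $b_2 = 0$. Then the network output equals $\sum_{\vert k\vert\leq N} a(k) F(x-k) = g_N$. Calling this parameter configuration $\theta$, we obtain $\Vert f(\theta) - g\Vert_{L^2} = \Vert g_N - g\Vert_{L^2} < \epsilon$, which is the claim.

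The argument is mostly bookkeeping: the only ingredient actually used is the upper Riesz bound $B$, which simultaneously controls the truncation error and places each translate in $L^2$. The one point worth being careful about is that the convergence asserted in the definition of $V(F)$ is $L^2$ convergence of the symmetric partial sums (equivalently, unconditional convergence, which follows from the two‑sided bounds in condition 1), so that the tail estimate above is legitimate; the partition of unity condition is never invoked, which is precisely why the weak Riesz hypothesis suffices.
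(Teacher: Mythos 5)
Your proof is correct and follows essentially the same route as the paper's: truncate the $L^2$-convergent expansion $g=\sum_k a(k)F(x-k)$ to a symmetric partial sum and realize that partial sum exactly with a two-layer network whose first-layer weights are all ones, whose biases are the integer shifts, and whose output weights are the coefficients $a(k)$. The only difference is that you justify the tail estimate explicitly via the upper Riesz bound $B$, whereas the paper simply invokes $L^2$ convergence of the partial sums; this is a welcome extra line of rigor, not a different argument.
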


The primary limitation of Theorem \ref{thm;u.a.t_V(F)} lies in its applicability solely to signals within the domain of $V(F)$. This prompts us to inquire whether Riesz bases can be employed to approximate arbitrary $L^2$-functions, even those outside the confines of $V(F)$. The significance of posing this question lies in the potential revelation that, if affirmed, INRs can also approximate such signals. This would, in turn, demonstrate the universality of $F$-activated INRs within the space of $L^2(\R)$ functions.

We now show, in order to be able to 
approximate arbitrary signals in $L^2(\R)$ the partition of unity 
condition, condition 2 of defn. \ref{riesz_basis;defn}, plays a key 
role. To analyse this situation, we introduce the scaled signal spaces.

\begin{definition}
    For a fixed $\Omega > 0$, let 
    \[V_{\Omega}(F) = \bigg{\{} s_{\Omega} = 
    \sum_{k \in \Z} a_{\Omega}(k) F(\frac{x}{\Omega} - k) : a \in 
    l^2(\R)\bigg{\}}.\]
    We call $V_{\Omega}(F)$ an $\Omega$-scaled signal space.
\end{definition}

\begin{example}
    The canonical example of an $\Omega$-scaled signal space is given by taking
    $F(x) = \mathrm{sinc}(2\Omega x)$. In this case 
    $V_{\Omega}(F)$ is the space of $\Omega$-bandlimited signals.
\end{example}

The difference between $V_{\Omega}(F)$ and $V(F)$ is that in the 
former the basis functions are scaled by $\Omega$. Previously, we 
remarked that one of the issues in applying thm. 
\ref{thm;u.a.t_V(F)} is that it does not provide a means for 
approximating general signals in $L^2(\R)$ by INRs. 
What we wish to establish now is that given an arbitrary signal 
$s \in L^2(\R)$ and an approximation error $\epsilon > 0$, if $\{F(x-k\}_{k \in \Z}$ is a Riesz basis then there exists a scale 
$\Omega(\epsilon)$, that depends on $\epsilon$, such that
the scaled signal space $V_{\Omega(\epsilon)}(F)$ can approximate $s$ to within $\epsilon$ in the $L^2$-norm. We will follow the approach taken by \cite{unser2000sampling} and give a brief overview of how to proceed. More details can be found in app. \ref{app;proofs}.

In order to understand how we can reconstruct a signal to within a given error using $V_{\Omega}(F)$, we define the approximation operator $A_{\Omega} : L^2(\R) \rightarrow V_{\Omega}(F)$ by
\begin{equation}\label{eqn;approx_operator}
    A_{\Omega}(s(x)) = \sum_{k \in \Z}\bigg{(}\int_{\R}
    s(y)\widetilde{F}(\frac{y}{\Omega} - k)\frac{dy}{\Omega} 
    \bigg{)}
    F(\frac{x}{\Omega} - k)
\end{equation}
where $\widetilde{F}$ is a suitable analysis function from a 
fixed test space. We will not go into the details of how to 
construct $\widetilde{F}$ but for now will simply assume such 
a $\widetilde{F}$ exists and remark that its definition 
depends on $F$. For details on how to construct 
$\widetilde{F}$ we refer the reader to appendix sec. 
\ref{app;proofs_error_kernel}.
The quantity $\int s(y) \widetilde{F}(\frac{y}{\Omega}-
k)\frac{dy}{\Omega}$ is to be thought of as the coefficients 
$a_\Omega(k)$ in reconstructing 
$s$. 

The approximation error is defined as
\begin{equation}\label{eqn;approx_error}
\epsilon_s(\Omega) = \vert\vert s - A_\Omega(s)\vert\vert_{L^2}.
\end{equation}
The goal is to understand how we can make the approximation error small by choosing $\Omega$ and the right analysis function
$\widetilde{F}$.

The general approach to this problem via sampling theory, see \cite{unser2000sampling} for details, is to proceed via the average approximation error:
\begin{equation}
\overline{\epsilon}_s(\Omega)^2 = \frac{1}{\Omega}
\int_0^{\Omega} \vert\vert s(\cdot - \tau) - A_\Omega(s(\cdot - \tau))
\vert\vert^2_{L^2}d\tau.
\end{equation}

Using Fourier analysis, see \citep{blu1999approximation}, it can be shown that $\overline{\epsilon}_s(\Omega)^2 = \int_{-\infty}^{+\infty} 
E_{\widetilde{F}, F}(\Omega\xi)\vert \hat{s}(\xi)\vert^2 
\frac{d\xi}{2\pi}$
where $\hat{s}$ denotes the Fourier transform of $s$ and $E_{\widetilde{F}, F}$
is the error kernel defined by
\begin{equation}\label{eqn;error_kernel_defn}
E_{\widetilde{F}, F}(\omega) = 
\vert 1 - \hat{\widetilde{F}}(\omega)\hat{F}(\omega)\vert^2 +
\vert \hat{\widetilde{F}}(\omega)\vert^2 
\sum_{k\neq 0}\vert \hat{F}(\omega + 2\pi k)\vert^2
\end{equation}
where $\hat{F}$ and $\hat{\widetilde{F}}$ denote the Fourier transforms of $F$ and $\widetilde{F}$ respectively. Understanding the approximation properties of the shifted basis functions $F_k = F(x - k)$ comes down to analysing the error kernel $E_{\widetilde{F}, F}$.
The reason being is that the average error 
$\overline{\epsilon}_s(\Omega)^2$ is a good predictor of the true error
$\epsilon_s(\Omega)^2$ as the following theorem, from \cite{blu1999approximation}, shows.

\begin{theorem}\label{thm;error_correction}
The $L^2$ approximation error $\epsilon_s(\Omega)^2$ can be written as
\begin{equation}
\epsilon_s(\Omega)^2 = \bigg{(} 
\int_{-\infty}^{\infty}E_{\widetilde{F}, F}(\Omega\xi)
\vert \hat{s}(\xi)\vert^2\frac{d\xi}{2\pi}
\bigg{)}^{1/2} + \epsilon_{corr}
\end{equation}
where $\epsilon_{corr}$ is a correction term negligible under 
most circumstances. Specifically, if $f \in W^r_2$ (Sobolev 
space of order $r$, see appendix \ref{app;proofs}) with $r > \frac{1}{2}$, then 
$\vert \epsilon_{corr}\vert \leq \gamma \Omega^r\vert\vert 
s^{(r)}\vert\vert_{L^2}$ where $\gamma$ is a known constant and  moreover, 
$\vert \epsilon_{corr}\vert = 0$ provided the signal $s$ is band-limited to $\frac{\pi}{\Omega}$.
\end{theorem}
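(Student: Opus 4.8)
The plan is to follow the quantitative Fourier analysis of Blu and Unser \citep{blu1999approximation}. Since we have already recorded the identity $\overline{\epsilon}_s(\Omega)^2 = \int_{-\infty}^{\infty} E_{\widetilde F,F}(\Omega\xi)\,|\hat s(\xi)|^2\,\tfrac{d\xi}{2\pi}$, the first term on the right-hand side of the claimed formula is exactly the average error $\overline{\epsilon}_s(\Omega)$, so it suffices to control the gap $\epsilon_{corr} = \epsilon_s(\Omega) - \overline{\epsilon}_s(\Omega)$ between the true error (evaluated at shift $\tau = 0$) and its shifted average. The structural fact that makes this tractable is that the approximation operator commutes with translation by a full sampling period, $A_\Omega\big(s(\cdot - \Omega)\big) = (A_\Omega s)(\cdot - \Omega)$, since such a shift merely relabels the grid $\Omega\Z$. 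Consequently $e(\tau) := \| s(\cdot - \tau) - A_\Omega(s(\cdot - \tau))\|_{L^2}^2$ is an $\Omega$-periodic function of $\tau$, and $\overline{\epsilon}_s(\Omega)^2$ is precisely its mean over one period.

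First I would expand $e(\tau)$ via Parseval, writing $\widehat{A_\Omega(s(\cdot - \tau))}$ in terms of the periodization of $\hat s$ along the aliasing lattice $\tfrac{2\pi}{\Omega}\Z$; the squared residual norm then becomes a double sum indexed by this lattice. The diagonal part is $\tau$-independent and reproduces the error-kernel integral, while each off-diagonal part carries a factor $e^{i\tfrac{2\pi n}{\Omega}\tau}$ with $n\neq 0$ — i.e.\ these are exactly the nonzero Fourier coefficients of the $\Omega$-periodic function $e(\tau)$. Averaging over a period annihilates all of them, which is why $\overline{\epsilon}_s(\Omega)^2$ collapses to the single integral, whereas $e(0)$ retains their sum; that sum, after passing from squared to unsquared norms via the elementary inequality $|\sqrt a - \sqrt b| \le \sqrt{|a - b|}$, is $\epsilon_{corr}$. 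Next I would bound these off-diagonal terms: each is, up to the bounded Fourier multipliers $\hat F(\Omega\cdot)$ and $\hat{\widetilde F}(\Omega\cdot)$, of the form $\int \hat s(\xi)\,\overline{\hat s\big(\xi + \tfrac{2\pi n}{\Omega}\big)}\,(\cdots)\,d\xi$ with $n\neq 0$, so Cauchy--Schwarz in $\xi$ followed by a geometric-type summation in $n$ dominates the whole correction by a constant multiple of $\big(\int |\xi|^{2r}|\hat s(\xi)|^2\,d\xi\big)^{1/2} = \|s^{(r)}\|_{L^2}$ once the scale factors are collected, which gives $|\epsilon_{corr}| \le \gamma\,\Omega^r\,\|s^{(r)}\|_{L^2}$; the hypothesis $r > \tfrac12$ is exactly what makes the $n$-series and the implicit tail integrals converge, via a Sobolev-type estimate on the decay of $\hat s$. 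Finally, if $s$ is bandlimited to $\pi/\Omega$ then $\hat s$ and each of its shifts $\hat s(\cdot + \tfrac{2\pi n}{\Omega})$, $n\neq 0$, have pairwise disjoint supports, so every off-diagonal integral vanishes identically and $\epsilon_{corr} = 0$.

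The step I expect to be the main obstacle is the Fourier bookkeeping of the second paragraph — writing $\widehat{A_\Omega(s(\cdot - \tau))}$ explicitly, correctly sorting which terms survive the $\tau$-average, and justifying the interchanges of sums, integrals and the average — since this is precisely where the Riesz-basis bounds of Definition \ref{riesz_basis;defn} and the decay of $\hat F$ and $\hat{\widetilde F}$ are genuinely needed for convergence. Once that decomposition is in hand, the estimate on $\epsilon_{corr}$ is a relatively routine Cauchy--Schwarz-plus-Sobolev computation, and the bandlimited case is immediate from the support considerations above.
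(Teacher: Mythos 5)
The paper does not actually prove this statement: it is imported verbatim (as Theorem 2/appendix results of \cite{blu1999approximation}) and used as a black box, with no argument given in the appendix. Your proposal is therefore not competing with an in-paper proof but reconstructing the external one, and it does so faithfully: the $\Omega$-periodicity of $e(\tau)$ coming from $A_\Omega$ commuting with shifts by a full grid period, the Parseval expansion over the aliasing lattice $\tfrac{2\pi}{\Omega}\Z$ with the $\tau$-average annihilating exactly the off-diagonal (oscillating) terms, the Cauchy--Schwarz-plus-Sobolev bound on those terms requiring $r>\tfrac12$, and the disjoint-support argument for the bandlimited case are precisely the ingredients of the Blu--Unser proof. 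Note also that in writing $\epsilon_{corr}=\epsilon_s(\Omega)-\overline{\epsilon}_s(\Omega)$ you are implicitly (and correctly) reading the left-hand side of the displayed formula as $\epsilon_s(\Omega)$ rather than $\epsilon_s(\Omega)^2$; the exponent on the left of the paper's equation is a typo, since the right-hand side carries the power $1/2$. The only caveat is the one you flag yourself: the sketch leaves the interchange of sums, integrals and the average, and the precise constant $\gamma$, at the level of an outline rather than a verification, but the structure is sound.
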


Thm. \ref{thm;error_correction} shows that the dominant part of the approximation error $\epsilon_{s}(\Omega)$ is controlled by the average error $\overline{\epsilon}_s(\Omega)$. This means that in order to show that there exists a scale $\Omega$ such that the scaled signal space
$V_{\Omega}(F)$ can be used to approximate $s \in L^2(\R)$ up to any given error, it suffices to show that 
\begin{equation}
    \lim_{\Omega \rightarrow 0}E_{\widetilde{F}, F} \rightarrow 0.
\end{equation}

The following lemma gives the required condition to guarantee vanishing of the error kernel as $\Omega \rightarrow 0$.

\begin{lemma}\label{lem;error_kernel_puc}
If the family of shifted basis function $\{F(x-k)\}_{k\in \Z}$ satisfies the condition
\begin{equation}
    \sum_{k \in Z}F(x+k) = 1\text{, } 
    \forall x \in \R
\end{equation}
then $\lim_{\Omega \rightarrow 0}E_{\widetilde{F}, F} \rightarrow 0$, for any $\widetilde{F} \in \overline{\mathcal{S}}$, where 
$\overline{\mathcal{S}}$ is the space of Schwartz functions $f$ whose Fourier transform satisfies $\hat{f}(0) = 1$.
\end{lemma}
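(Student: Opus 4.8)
The plan is to pass to the Fourier side and reduce the statement to the behaviour of the error kernel near the origin. For each fixed frequency $\xi$ the argument $\Omega\xi$ of $E_{\widetilde F,F}$ tends to $0$ as $\Omega\to 0$, so it suffices to prove that $E_{\widetilde F,F}(\omega)\to 0$ as $\omega\to 0$; running the same estimate on a fixed neighbourhood of $0$ also yields the locally uniform version, which together with the a.e.-boundedness of $E_{\widetilde F,F}$ (itself a consequence of the weak Riesz bound) is what is actually fed into Theorem \ref{thm;error_correction}. Thus everything comes down to controlling the two summands of \eqref{eqn;error_kernel_defn} as $\omega\to 0$.

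The first step is to rewrite the partition of unity hypothesis in the Fourier domain. Applying Poisson summation to $\sum_{k\in\Z}F(x+k)=1$ (with the convention $\hat F(\omega)=\int_{\R}F(x)e^{-i\omega x}\,dx$ that is consistent with the $2\pi k$ shifts appearing in \eqref{eqn;error_kernel_defn}) gives $\sum_{k\in\Z}\hat F(2\pi k)e^{2\pi i kx}=1$ for all $x$; matching Fourier coefficients on the torus yields $\hat F(0)=1$ and $\hat F(2\pi k)=0$ for every nonzero integer $k$. This is the only way (PUC) will enter the argument.

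Next I would handle the two terms of \eqref{eqn;error_kernel_defn} separately. Since $\hat F$ is continuous at $0$ with $\hat F(0)=1$ and $\hat{\widetilde F}$ is smooth with $\hat{\widetilde F}(0)=1$ (because $\widetilde F\in\overline{\mathcal S}$), the product $\hat{\widetilde F}(\omega)\hat F(\omega)\to 1$, so $|1-\hat{\widetilde F}(\omega)\hat F(\omega)|^2\to 0$. For the aliasing term, write
\begin{equation*}
\sum_{k\neq 0}|\hat F(\omega+2\pi k)|^2 = G(\omega)-|\hat F(\omega)|^2,\qquad G(\omega):=\sum_{k\in\Z}|\hat F(\omega+2\pi k)|^2,
\end{equation*}
where $G$ is $2\pi$-periodic; if $G$ is continuous at $0$ then $G(\omega)\to G(0)=\sum_{k\in\Z}|\hat F(2\pi k)|^2=|\hat F(0)|^2=1$ by the Fourier form of (PUC), while $|\hat F(\omega)|^2\to 1$, so the aliasing sum tends to $0$, and since $|\hat{\widetilde F}(\omega)|^2\to 1$ the whole second summand of \eqref{eqn;error_kernel_defn} vanishes in the limit. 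Adding the two contributions gives $E_{\widetilde F,F}(\omega)\to 0$.

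The main obstacle is exactly the continuity of $G$ at $0$, i.e.\ interchanging the limit $\omega\to 0$ with the infinite summation over $k$: membership $\hat F\in L^2$ alone does not suffice, and one needs a quantitative decay estimate of the type $|\hat F(\omega)|\lesssim (1+|\omega|)^{-1/2-\epsilon}$ to make the series converge locally uniformly. Such an estimate is a standard companion of the (weak) Riesz basis condition in sampling theory (cf.\ \citep{unser2000sampling,blu1999approximation}), and it is immediate for the activations of interest: for $\mathrm{sinc}$, $\hat F$ is compactly supported so the aliasing sum is identically zero in a neighbourhood of $0$, and for the Gaussian $\hat F$ decays super-polynomially. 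Once this domination is secured the remaining steps are routine continuity arguments.
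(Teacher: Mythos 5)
Your proposal is correct and follows essentially the same route as the paper's own proof sketch: reduce everything to the behaviour of $E_{\widetilde F,F}$ at the origin, use Poisson summation to turn the partition of unity condition into $\hat F(0)=1$ and $\hat F(2\pi k)=0$ for $k\neq 0$, and then make the two summands of the error kernel vanish separately. If anything you are more careful than the paper, which justifies $\lim_{\Omega\to 0}E_{\widetilde F,F}(\Omega\xi)=E_{\widetilde F,F}(0)$ by appealing only to continuity of $\hat F$ and $\hat{\widetilde F}$ at $0$ and never addresses the interchange of limit and infinite sum in the aliasing term $\sum_{k\neq 0}\vert\hat F(\omega+2\pi k)\vert^2$, which you correctly isolate as the one delicate step and resolve via a decay estimate on $\hat F$.
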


The above lemma shows the importance of the partition of unity condition (\textbf{PUC}), condition 2, from defn. \ref{riesz_basis;defn}. A sketch of the proof of
lem. \ref{lem;error_kernel_puc} is given in 
app. \ref{app;proofs_error_kernel} together with details on the space $\overline{\mathcal{S}}$ and its relevance to the error kernel $E_{\widetilde{F}, F}$.

From prop. \ref{prop;egs}, we see that the $\mathrm{sinc}$ function has 
vanishing error kernel as the scale $\Omega \rightarrow 0$. However, a 
Gaussian does not necessarily have vanishing error kernel as 
$\Omega \rightarrow 0$.

Lem. \ref{lem;error_kernel_puc} immediately implies the following approximation result, proof can be found in app. \ref{app;proofs_main_results}.
\begin{proposition}\label{prop;approx_V(F)_L2}
    Let $s \in L^2(\R)$ and $\epsilon > 0$. Assume the shifted functions
    $\{F(x-k)\}_{k\in\Z}$ form a Riesz basis for $V(F)$. Then there exists an $\Omega > 0$ and an $f_{\Omega} \in V_{\Omega}(F)$ such that
    \begin{equation}
        \vert\vert s - f_{\Omega}\vert\vert_{L^2} < \epsilon.
    \end{equation}
\end{proposition}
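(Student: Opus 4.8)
\emph{Proof plan.} The plan is to combine the density of band-limited signals in $L^2(\R)$ with the error-kernel estimate of Theorem~\ref{thm;error_correction} and the vanishing statement of Lemma~\ref{lem;error_kernel_puc}. Since $\{F(x-k)\}_{k\in\Z}$ is a Riesz basis, it satisfies the partition of unity condition (condition~2 of Definition~\ref{riesz_basis;defn}), so Lemma~\ref{lem;error_kernel_puc} furnishes an analysis function $\widetilde F\in\overline{\mathcal S}$ for which $E_{\widetilde F,F}(\omega)\to 0$ as $\omega\to 0$. I would also record, from the upper Riesz bound $B$, that the periodization $\sum_{k}\lvert\widehat F(\omega+2\pi k)\rvert^2$ is bounded by $B$ for a.e.\ $\omega$; since $\widehat{\widetilde F}$ is bounded (being Schwartz), it follows that $E_{\widetilde F,F}$ is uniformly bounded on $\R$ by some constant $M$.

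First I would reduce to a band-limited signal: using Plancherel and dominated convergence, pick $\Omega_0>0$ so that the low-pass truncation $s_0$ with $\widehat{s_0}=\widehat s\cdot\mathbf 1_{[-\Omega_0,\Omega_0]}$ obeys $\lVert s-s_0\rVert_{L^2}<\epsilon/2$. For $\Omega<\pi/\Omega_0$ the signal $s_0$ is band-limited to $\pi/\Omega$, so Theorem~\ref{thm;error_correction} gives $\epsilon_{corr}=0$ and $\epsilon_{s_0}(\Omega)$ is controlled by the average error
\[
\overline\epsilon_{s_0}(\Omega)^2=\int_{\R}E_{\widetilde F,F}(\Omega\xi)\,\lvert\widehat{s_0}(\xi)\rvert^2\,\frac{d\xi}{2\pi}.
\]
As $\Omega\to 0$ the integrand tends to $0$ pointwise (by Lemma~\ref{lem;error_kernel_puc}) and is dominated by $M\lvert\widehat{s_0}(\xi)\rvert^2\in L^1(\R)$, so dominated convergence yields $\overline\epsilon_{s_0}(\Omega)\to 0$, hence $\epsilon_{s_0}(\Omega)\to 0$. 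Choosing $\Omega<\pi/\Omega_0$ small enough that $\epsilon_{s_0}(\Omega)<\epsilon/2$ and setting $f_\Omega:=A_\Omega(s_0)\in V_\Omega(F)$ (recall the image of the operator in \eqref{eqn;approx_operator} lies in $V_\Omega(F)$, and $\lVert s_0-f_\Omega\rVert_{L^2}=\epsilon_{s_0}(\Omega)$ by \eqref{eqn;approx_error}), the triangle inequality gives $\lVert s-f_\Omega\rVert_{L^2}\le\lVert s-s_0\rVert_{L^2}+\lVert s_0-f_\Omega\rVert_{L^2}<\epsilon$.

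The hard part will be the dominated-convergence step, which is where the Riesz hypothesis genuinely does work on two fronts: the PUC half of the definition gives pointwise vanishing of $E_{\widetilde F,F}$ near the origin (via Lemma~\ref{lem;error_kernel_puc}), while the upper Riesz bound $B$ supplies the uniform bound $M$ serving as the dominating constant. A secondary subtlety, and the reason the band-limiting reduction in the first step is essential rather than cosmetic, is that the clean correction estimate $\lvert\epsilon_{corr}\rvert\le\gamma\Omega^r\lVert s^{(r)}\rVert_{L^2}$ of Theorem~\ref{thm;error_correction} needs Sobolev regularity that a generic $s\in L^2(\R)$ need not possess; passing to $s_0$ makes $\epsilon_{corr}$ vanish outright.
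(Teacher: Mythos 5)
Your proof is correct, but it follows a genuinely different route from the paper's. The paper argues in two stages: it first proves the proposition for signals $s \in W^1_2(\R)$, using the Sobolev bound $\vert\epsilon_{corr}\vert \leq \gamma\Omega\Vert s^{(1)}\Vert_{L^2}$ from Theorem~\ref{thm;error_correction} together with Lemma~\ref{lem;error_kernel_puc}, and then extends to all of $L^2(\R)$ by a three-term triangle inequality using the density of $C_c^{\infty}(\R)$ and the boundedness of $A_{\Omega}$ as an operator on $L^2(\R)$ (needed to control $\Vert A_{\Omega}(f)-A_{\Omega}(s)\Vert_{L^2}$). You instead reduce to a band-limited low-pass truncation $s_0$, for which Theorem~\ref{thm;error_correction} gives $\epsilon_{corr}=0$ exactly, and you apply $A_{\Omega}$ to $s_0$ rather than to $s$, so a two-term triangle inequality suffices and no operator-norm bound on $A_{\Omega}$ is required. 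What your approach buys is a shorter reduction and the elimination of the external fact $\Vert A_{\Omega}\Vert_{op}<\infty$; the price is that you must carry out the dominated-convergence argument for $\overline{\epsilon}_{s_0}(\Omega)\to 0$ explicitly, which you do correctly --- the dominating constant $M$ does follow from the upper Riesz bound via the a.e.\ bound $\sum_k\vert\hat F(\omega+2\pi k)\vert^2\leq B$ and the boundedness of $\hat{\widetilde F}$, exactly as in the appendix's sketch of Lemma~\ref{lem;error_kernel_puc} (the paper instead invokes that lemma as a black box to assert $\overline{\epsilon}(\Omega)<\epsilon/2$ for small $\Omega$). Both proofs ultimately rest on the same pointwise vanishing of $E_{\widetilde F,F}$ at the origin, which is where the partition of unity condition enters.
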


Prop. \ref{prop;approx_V(F)_L2} implies that the signal $s$ can be approximated by basis functions given by shifts of $F$ with bandwidth $1/\Omega$. 

Using prop. \ref{prop;approx_V(F)_L2} we obtain a universal approximation result for neural networks employing Riesz bases as their activation functions, the proof can be found in app. \ref{app;proofs_main_results}.

\begin{theorem}\label{thm;uat_L2}
    Let $s \in L^2(\R)$ and $\epsilon > 0$. Assume the shifted functions
    $\{F(x-k)\}_{k\in\Z}$ form a Riesz basis for $V(F)$. 
Then there exists a 2-layer INR $\mathcal{N}$, with a parameter set $\theta$,  
$n(\epsilon)$ neurons in the hidden layer, and an $\Omega > 0$ such that 
\begin{equation*}
\vert\vert \mathcal{N}(\theta) - s\vert\vert_{L^2} < \epsilon
\end{equation*} 
where $\mathcal{N}(\theta)$ employs $F_{\Omega}$ as its activation in the hidden layer, where $F_{\Omega}(x) = F(\frac{1}{\Omega}x)$.
\end{theorem}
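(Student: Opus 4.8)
The plan is to combine Proposition~\ref{prop;approx_V(F)_L2} with a truncation argument and then to recognise the resulting finite sum as the output of a $2$-layer INR, exactly as in the proof of Theorem~\ref{thm;u.a.t_V(F)}. First I would apply Proposition~\ref{prop;approx_V(F)_L2} to the signal $s$ with tolerance $\epsilon/2$: this produces a scale $\Omega > 0$ (this is the $\Omega$ appearing in the statement) together with a function $f_\Omega \in V_\Omega(F)$ satisfying $\| s - f_\Omega \|_{L^2} < \epsilon/2$. Write $f_\Omega(x) = \sum_{k \in \Z} a_\Omega(k)\, F\!\left(\tfrac{x}{\Omega} - k\right)$ with $a_\Omega \in l^2(\Z)$.

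Next I would truncate the series. A change of variables $y = x/\Omega$ converts the Riesz upper bound for $\{F(\cdot - k)\}_{k\in\Z}$ into the estimate $\big\| \sum_{k \notin K} a_\Omega(k) F(\tfrac{x}{\Omega}-k)\big\|_{L^2}^2 \le \Omega B \sum_{k \notin K} |a_\Omega(k)|^2$ valid for every finite $K \subset \Z$, where $B$ is the upper Riesz constant from Definition~\ref{riesz_basis;defn}. Since $a_\Omega \in l^2(\Z)$, choose $N = N(\epsilon)$ so large that $\Omega B \sum_{|k| > N} |a_\Omega(k)|^2 < (\epsilon/2)^2$, and set $f_{\Omega,N}(x) = \sum_{|k| \le N} a_\Omega(k) F(\tfrac{x}{\Omega} - k)$; then $\|f_\Omega - f_{\Omega,N}\|_{L^2} < \epsilon/2$, hence $\|s - f_{\Omega,N}\|_{L^2} < \epsilon$ by the triangle inequality. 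Now observe that $F(\tfrac{x}{\Omega} - k) = F_\Omega(x - \Omega k)$ with $F_\Omega(x) = F(\tfrac{1}{\Omega} x)$, so $f_{\Omega,N}$ is realised exactly by a $2$-layer INR $\mathcal{N}$ with activation $F_\Omega$ and $n(\epsilon) = 2N+1$ hidden neurons: index the hidden units by $k \in \{-N,\dots,N\}$, give unit $k$ input weight $1$ and bias $-\Omega k$ so that it outputs $F_\Omega(x - \Omega k)$, and in the output layer assign it the weight $a_\Omega(k)$, with output bias $0$. Collecting these into a parameter set $\theta$ gives $\mathcal{N}(\theta)(x) = \sum_{k=-N}^{N} a_\Omega(k) F_\Omega(x - \Omega k) = f_{\Omega,N}(x)$, and therefore $\|\mathcal{N}(\theta) - s\|_{L^2} < \epsilon$.

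The substantive input here is Proposition~\ref{prop;approx_V(F)_L2} (which in turn rests on the error-kernel identity and Lemma~\ref{lem;error_kernel_puc}, i.e.\ on the partition of unity condition satisfied by a genuine Riesz basis); everything downstream — the tail bound from the Riesz upper bound and the identification of a finite element of $V_\Omega(F)$ with an INR — is routine bookkeeping. The only point that requires a little care is the role of $\Omega$: it must be fixed \emph{before} truncating, since the constant $\Omega B$ in the tail estimate and the cut-off $N(\epsilon)$ both depend on it, so that in the end $n(\epsilon)$ depends on $\epsilon$ through both $\Omega(\epsilon)$ and $N(\epsilon)$.
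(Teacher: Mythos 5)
Your proposal is correct and follows essentially the same route as the paper's proof: apply Proposition~\ref{prop;approx_V(F)_L2} to get $f_\Omega \in V_\Omega(F)$ within $\epsilon/2$ of $s$, truncate the series to within $\epsilon/2$, and realise the finite sum exactly as a $2$-layer INR with unit input weights, biases $-\Omega k$, activation $F_\Omega$, and output weights $a_\Omega(k)$. Your explicit tail estimate via the upper Riesz bound (after the change of variables introducing the factor $\Omega B$) is a slightly more careful justification of the partial-sum convergence that the paper simply asserts, but it is the same argument.
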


\begin{remark}\label{rmk;remark}
Thm. \ref{thm;uat_L2} shows why $\mathrm{sinc}$ being able to generate a Riesz basis is an optimal condition to satisfy for an activation function. Note that in general, any function in $L^2(\R)$ that generates a Riesz basis will be optimal in this sense. Furthermore, out of the activations that practitioners in the ML community use, such as $\mathrm{sinc}$, $\mathrm{sine}$, Gaussian, $\mathrm{tanh}$, $\mathrm{ReLU}$, $\mathrm{sigmoid}$, 
we find $\mathrm{sinc}$ is the optimal. For an overview of how the partition of unity condition, condition 2 of defn. \ref{riesz_basis;defn}, plays a role in thm. \ref{thm;uat_L2} see app. \ref{sec;what_is_poc}.
\end{remark}

Theorem \ref{thm;uat_L2} underscores the significance of the 
activation function within an INR when it comes to signal 
reconstruction in the $L^2(\R)$. Specifically, as 
exemplified in prop \ref{prop;egs}, it becomes evident 
that an INR equipped with a $\mathrm{sinc}$ activation function can achieve 
reconstructions of signals in $L^2(\R)$ up to any accuracy, rendering it the optimal choice for the INR 
architecture. See app. \ref{app:universal} for the connection of the above analysis to the universal approximation theorem.

\section{Extensions of the theory}\label{sec;extend_theory}

% \paragraph{Results for deep networks:}

\noindent \textbf{Results for deep networks: }The theoretical results from the previous section were initially demonstrated in the context of shallow networks to emphasize fundamental techniques. These results naturally extend to deep networks, as discussed in app. \ref{app;extend_to_deep}.

% \paragraph{Other basis functions:} 

\noindent \textbf{Other basis functions: } Various common basis functions are utilized for interpolation in existing literature. To contextualize our results alongside these alternatives, we direct the reader to  app. \ref{app;other_bases}.

% \noindent \textbf{Results for deep networks: }  
%\paragraph{Positional encoding:} 
\noindent \textbf{Positional encoding: }   For insights into how our theoretical contributions relate to positional encodings, please refer to app. \ref{app;sinc_PE}.

\section{Experiments}
\vspace{-1em}
In this section, we aim to compare the performance of different INR activations. First, we focus on image and NeRF reconstructions and later move on to dynamical systems. 

% \begin{figure*}[!htp]
%     \centering
%     \includegraphics[width=0.8\columnwidth]{figures/sinc-nerf.jpg}
%     \vspace{-2em}
%     \caption{\small Example qualitative results of NeRF reconstructions using $\mathrm{sinc}$-INRs.}
%     \label{fig:nerf}
% \end{figure*} 

\subsection{Image reconstruction}

A critical problem entailed with INRs is that they are sensitive to the hyperparameters in activation functions \citep{ramasinghe2022beyond}. That is, one has to tune the hyperparameters of the activations to match the spectral properties of the encoded signal. Here, we focus on the robustness of activation parameters when encoding different signals. To this end, we do a grid search and find the \emph{single} best performing hyperparameter setting for \emph{all} the images in a sub-sampled set of the DIV2K dataset \citep{Agustsson_2017_CVPR_Workshops} released by \cite{tancik2020fourier}. For example, for $\mathrm{sinc}$ activations, we experiment with different bandwidth parameters, each time fixing it across the entire dataset. Then, we select the bandwidth parameter that produced the best results.  Since all the compared activations contain a tunable parameter, we perform the same for all the activations and find the best parameters for each. This dataset contains images of varying spectral properties: $32$ images of \emph{Text} and \emph{Natural scenes}, each. We train with different sampling rates and test against the full ground truth image. The PSNR plots are shown in Fig.~\ref{fig:image}. As depicted, $\mathrm{sinc}$ activation performs better or on-par with other activations. We use $4$-layer networks with $256$ width for these experiments.

\subsection{Neural Radiance Fields}
\vspace{-0.5em}
NeRFs are one of the key applications of INRs, popularized by \cite{mildenhall2021nerf}. Thus, we evaluate the performance of $\mathrm{sinc}$-INRs in this setting. Table.~\ref{tab:nerf} demonstrates quantitative results. We observed that all the activations perform on-par with NeRF reconstructions with proper hyperparameter tuning, where $\mathrm{sinc}$ outperformed the rest marginally.

\begin{table}
    \scriptsize
\centering
   \begin{tabular}{||c|c|c||}
\hline
Activation &  PSNR & SSIM \\
\hline
\hline
Gaussian & 31.13 & 0.947 \\
Sinusoid & 28.96 & 0.933 \\
Wavelet & 30.33 & 0.941 \\
$\mathrm{Sinc}$ & \textbf{31.37} & \textbf{0.947} \\
\hline
\hline
\end{tabular}
\caption{\small \textbf{Quantitative comparison in novel view synthesis on the real synthetic dataset \citep{mildenhall2021nerf}.} $\mathrm{sinc}$-INRs perform on-par with other activations.}
    \label{tab:nerf}
\end{table}

\vspace{-0.5em}
\subsection{Dynamical systems}
\vspace{-0.5em}
It is intriguing to see if the superior signal encoding properties of $\mathrm{sinc}$-INRs (as predicted by the theory) would translate to a clear advantage in a challenging setting. To this end, we choose dynamical (chaotic) systems as a test bed.

Dynamical systems can be defined in terms of a time dependant state space $\textbf{x}(t) \in \mathbb{R}^D$ where the time evolution of $\mathbf{x}(t)$ can be described via a differential equation,

\begin{equation}
\label{eq:dynamical_system}
    \frac{ d \mathbf{x}(t)}{dt} = f(\mathbf{x}(t), \alpha),
\end{equation}

where $f$ is a non-linear function and $\alpha$ are a set of system parameters. The solution to the differential equation \ref{eq:dynamical_system} gives the time dynamics of the state space $\textbf{x}(t)$. In practice, we only have access to discrete measurements $[\textbf{y}(t_1), \textbf{y}(t_2), \dots \textbf{y}(t_Q)]$ where $\textbf{y}(t) = g(\textbf{x}(t)) + \eta$ and  $\{t_n\}_{n=1}^{Q}$ are discrete instances in time. Here, $g(\cdot)$  can be the identity or any other non-linear function, and $\eta$ is noise. Thus, the central challenge in modeling dynamical systems can be considered as recovering the characteristics of the state space from such discrete observations.

\begin{table}
\scriptsize
    \centering
    \begin{tabular}{@{}ccccclll@{}}
    \toprule
    & \multicolumn{3}{c}{Rossler} && \multicolumn{3}{c}{Lorenz}\\
    \cmidrule{2-4} \cmidrule{6-8}
    Activation & $n = 0.1$ &  $n = 0.5$ &  $n = 1.$ & & $n = 0.1$ &  $n = 0.5$ &  $n = 1.$\\
 \cmidrule{2-4} \cmidrule{6-8}
    Baseline & 42.1 & 29.8 & 22.3 && 43.8 & 28.2 & 20.3 \\
    Gaussian & 46.6 & 37.1 & 33.6 && 45.7 & 39.1 & 35.7 \\
    Sinusoid & 45.1 & 36.6 & 32.1 & & 42.1 & 37.4 & 30.3 \\
    Wavelet & 40.3 & 35.9 & 30.9 &  & 38.2 & 37.3 & 31.8 \\
    Sinc & \textbf{48.9} & \textbf{42.8} & \textbf{38.5} & &\textbf{46.2 }& \textbf{40.9 }& \textbf{39.3 }\\
    \bottomrule
    \end{tabular}
    \caption{\small SINDy reconstructions (PSNR) with different noise levels ($n=$standard deviation of the Gaussian noise) injected into samples. }
    \label{tab:sindy}
\end{table}
\textbf{We note that modeling dynamical systems and encoding signals using INRs are analogous tasks. That is, modeling dynamical systems can be interpreted as recovering characteristics of a particular system via measured physical quantities over time intervals. Similarly, coordinate networks are used to recover a signal given discrete samples.}

\begin{figure*}
    \centering
    \includegraphics[width=2.\columnwidth]{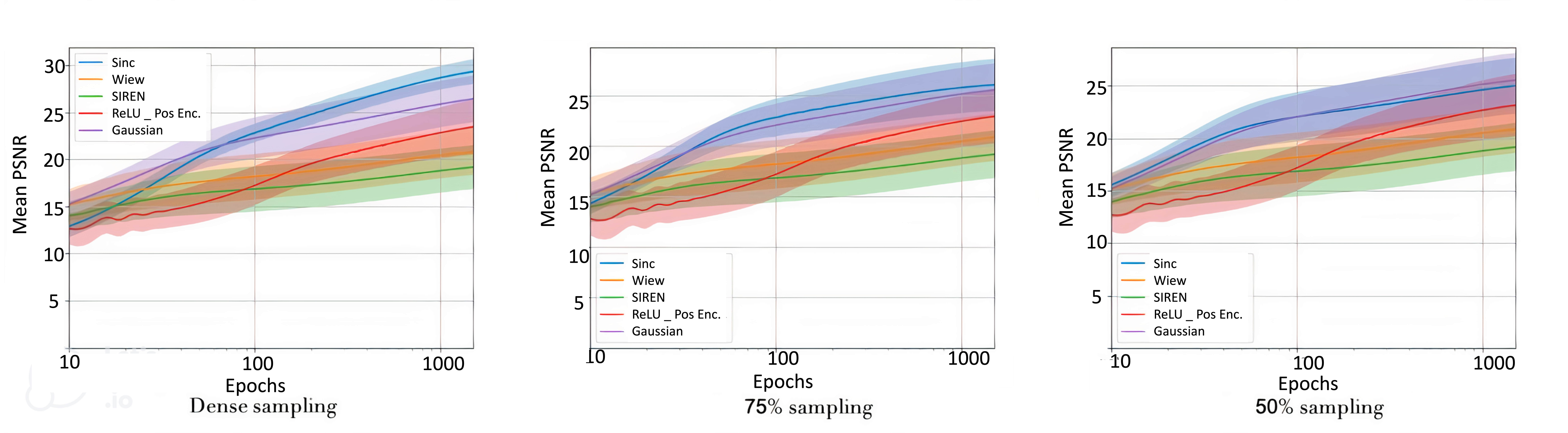}
    \caption{\small \textbf{Comparison of Image reconstruction across different INRs over DIVK dataset.} We run a grid search to find the optimal parameters for each INR. Note that a single optimal parameter setting is used for each activation, across all the images in the dataset.}
    \label{fig:image}
\end{figure*}

    \begin{figure*}[!htp]
    \centering
    \includegraphics[width=2.\columnwidth]{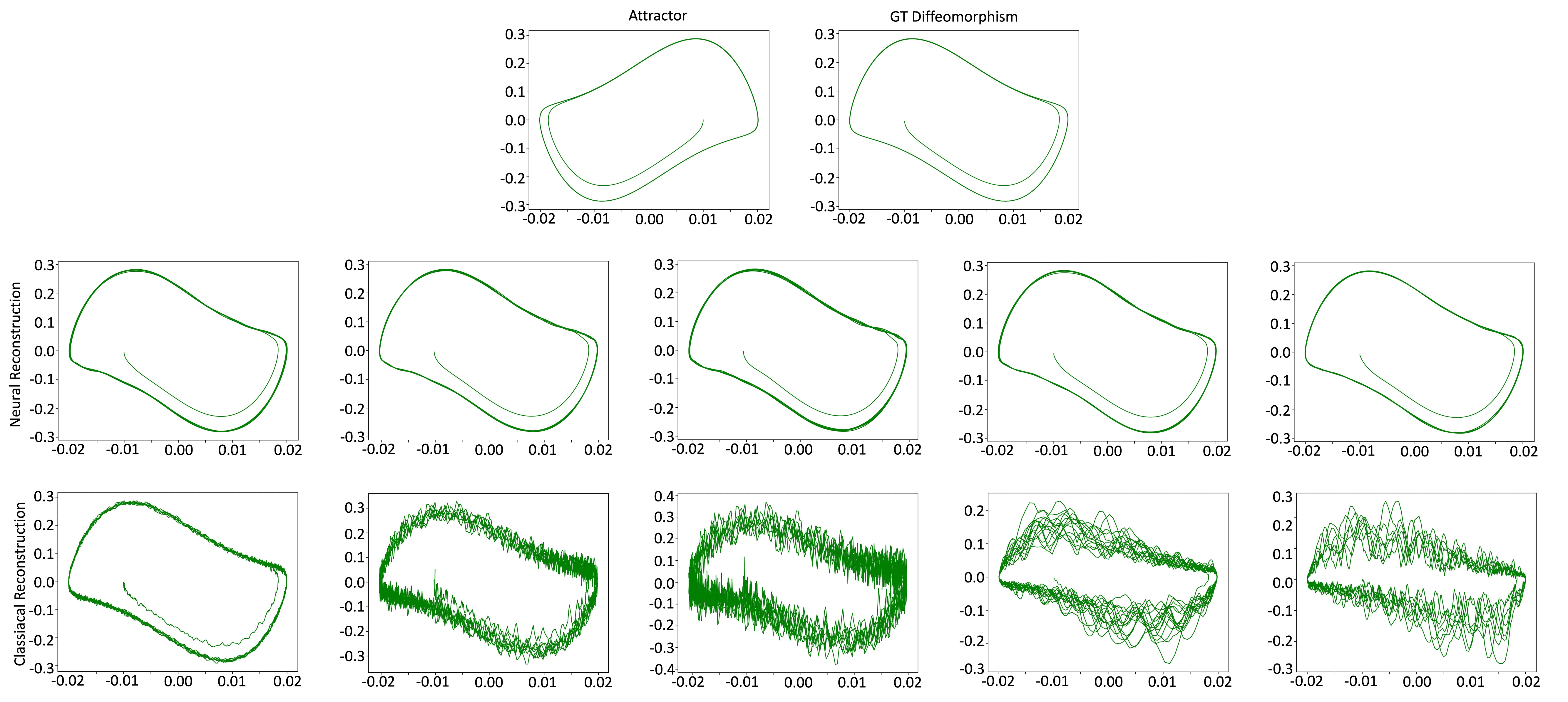}
    \vspace{-1em}
    \caption{\small Discovering the dynamics from partial observations. We use the Vanderpol system for this illustration. \textit{Top row:}  the original attractor and the diffeomorphism obtained by the SVD decomposition of the Hankel matrix (see Sec.~\ref{sec:dynamics}) without noise. \textit{Third row:} The same procedure is used to obtain the reconstructions with noisy, random, and sparse samples. \textit{Second row:} First, a $\mathrm{sinc}$-INR is used to obtain a continuous reconstruction of the signal from discrete samples, which is then used as a surrogate signal to resample measurements. Afterwards, the diffeomorphisms are obtained using those measurements. As shown, $\mathrm{sinc}$-INRs are able to recover the dynamics more robustly with noisy, sparse, and random samples. }
    \label{fig:diffeomorphism}
\end{figure*} 

%\vspace{-1cm}

\begin{figure*}
    \centering
    \includegraphics[width=2.\columnwidth]{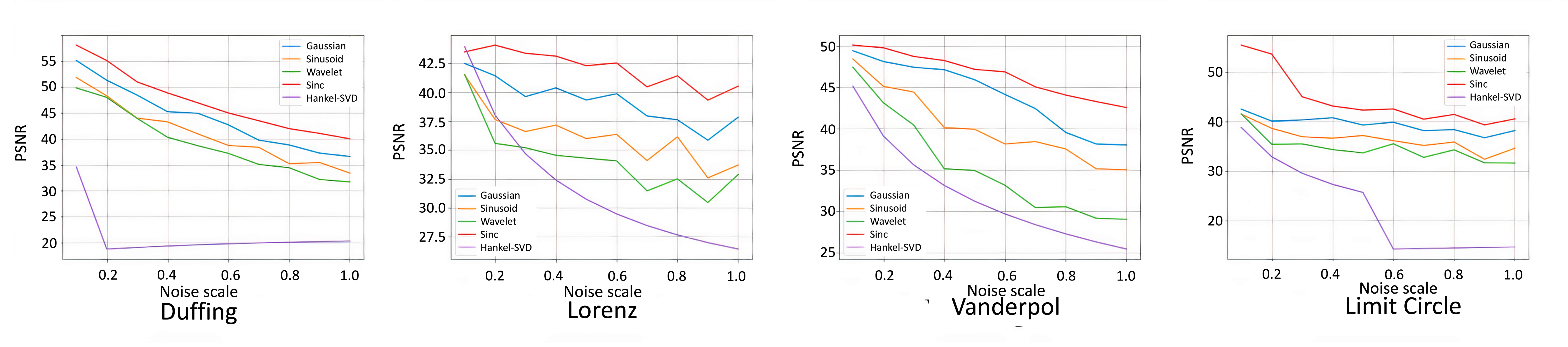}
    \vspace{-2em}
    \caption{\small Quantitative comparison on discovering the dynamics of latent variables using INRs vs classical methods. }
    \label{fig:dynamics_chart}
\end{figure*} 

\begin{figure*}[!htp]
    \centering
    \includegraphics[width=2.\columnwidth]{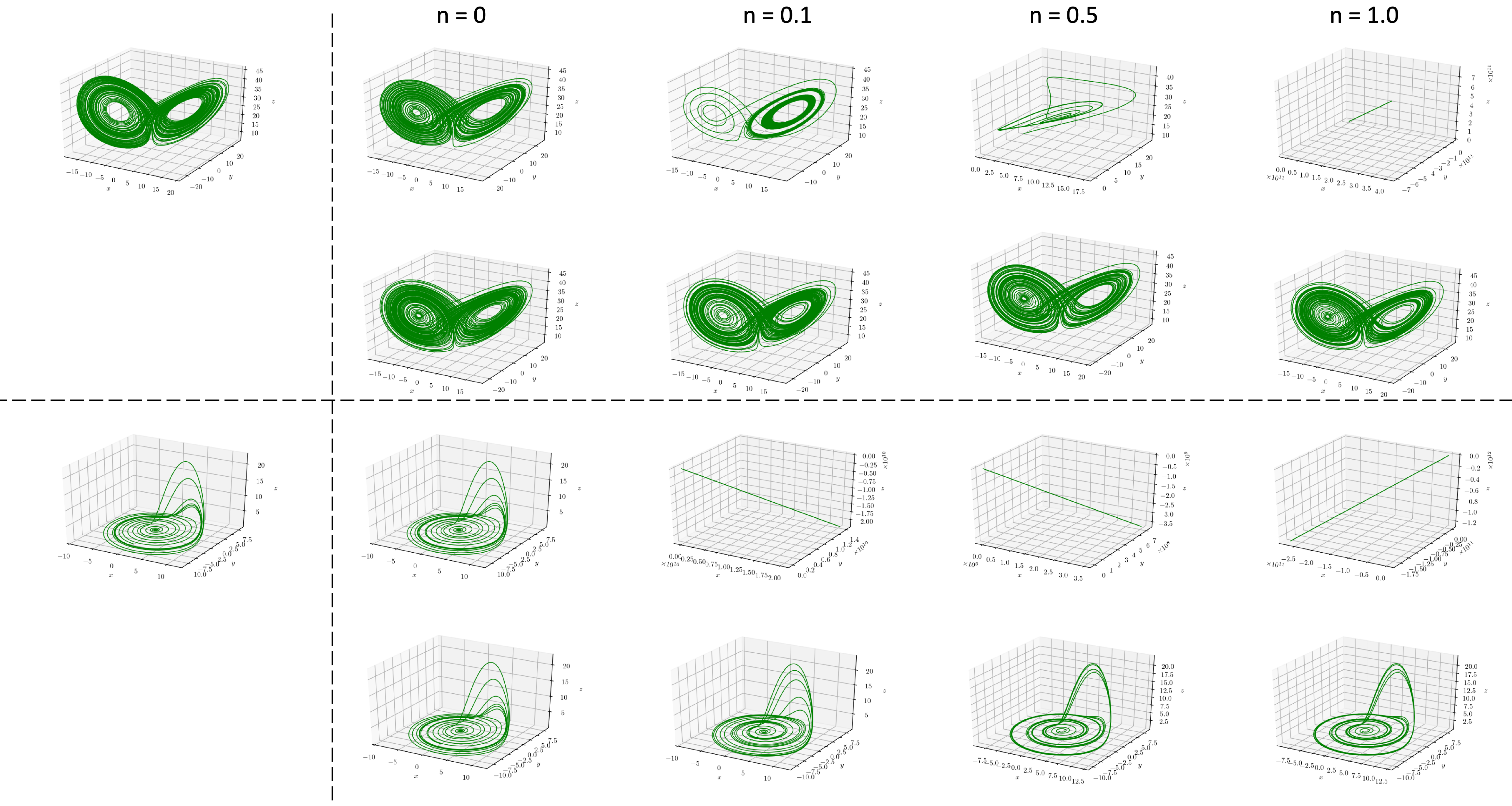}

    \caption{\small We use  $\mathrm{sinc}$-INRs to improve the results of the SINDy algorithm. The top block and the bottom block demonstrate experiments on the Lorenz system and the Rossler system, respectively. In each block, the top row and the bottom row represent the results of the baseline SINDy algorithm and the improved version (using coordinate networks). As evident, coordinate networks can be used to obtain significantly robust results.}
    \label{fig:sindy}
\end{figure*} 

\subsubsection{Discovering the dynamics of latent variables}
\label{sec:dynamics}
\vspace{-1em}

In practical scenarios, we often encounter limitations in measuring all the variables influencing a system's dynamics. When only partial measurements are available, deriving a closed-form model for the system becomes challenging. However, Takens' Theorem (refer to App. \ref{sec:taken}) offers a significant insight. It suggests that under certain conditions, augmenting partial measurements with delay embeddings can produce an attractor diffeomorphic to the original one. This approach is remarkably powerful, allowing for the discovery of complex system dynamics from a limited set of variables.

\textbf{Time Delay Embedding.} To implement this, we start with discrete time samples of an observable variable $y(t)$. We construct a Hankel matrix $\textbf{H}$ by augmenting these samples as \emph{delay embeddings} in each row:

 \begin{equation}
 \label{eq:hankel}
     \textbf{H} = \begin{bmatrix}
     y_1(t_1) & y_1(t_2) & \dots & y_1(t_n) \\
     y_1(t_2) & y_1(t_3) & \dots & y_1(t_{n+1}) \\
     \vdots & \vdots & \ddots & \vdots \\
     y_1(t_m) & y_1(t_{m+1}) & \dots & y_1(t_{m+n+1})
     \end{bmatrix}.
 \end{equation}

According to Takens' Theorem, the dominant eigenvectors of this Hankel matrix encapsulate dynamics that are diffeomorphic to the original attractor. For our experiment, we utilize systems such as the Vanderpol, Limit cycle attractor, Lorenz, and Duffing equations (see \ref{ds_eqns}). We generate $5000$ samples, spanning from $0$ to $100$, to form the Hankel matrix. Subsequently, we extract its eigenvectors and plot them to visualize the surrogate attractor that mirrors the original attractor. To assess the method's robustness against noise, we introduce noise into the $y(t)$ samples from a uniform distribution $\eta \sim U(-n, n)$, varying $n$. To demonstrate the efficacy of Implicit Neural Representations (INRs) in this context, we employ a $\mathrm{sinc}$-INR to encode the original measurements as a continuous signal. Initially, we train a $\mathrm{sinc}$-INR using discrete pairs of $t$ and $y(t)$ as inputs and labels. Then, we use the sampled values from the INR as a surrogate signal to create the Hankel matrix, which yields robust results. Interestingly, the continuous reconstruction from the $\mathrm{sinc}$-INR requires sparser samples (with $n \tau = 0.2)$), thus overcoming a restrictive condition typically encountered in this methodology. The results, as depicted in Fig.\ref{fig:diffeomorphism} and Fig.\ref{fig:dynamics_chart}, clearly demonstrate that $\mathrm{sinc}$-INRs can accurately recover the dynamics of a system from partial, noisy, random, and sparse observations. In contrast, the performance of classical methods deteriorates under these conditions, underscoring the advantage of the $\mathrm{sinc}$-INR approach in handling incomplete and imperfect data.

\vspace{-1em}
\section{Discovering governing equations}
\vspace{-0.5em}

The SINDy algorithm is designed to deduce the governing 
equations of a dynamical system from discrete observations 
of its variables. Consider observing the time dynamics of 
a $D$-dimensional variable $\mathbf{y}(t) = [y_1(t), 
\dots, y_d(t)]$. For observations at $N$ time stamps, we 
construct the matrix $\mathbf{Y} = [\mathbf{y}(t_1), 
\mathbf{y}(t_2), \dots \mathbf{y}(t_N) ]^{T} \in 
\mathbb{R}^{N \times D}$. The initial step in SINDy 
involves computing $\dot{\mathbf{Y}} = [\dot{\mathbf{y}
(t_1)}, \dot{\mathbf{y}(t_2)}, \dots \dot{\mathbf{y}(t_N)} 
]^{T} \in \mathbb{R}^{N \times D}$, achieved either through finite difference or continuous approximation techniques. Subsequently, an augmented library $\Theta (\mathbf{Y})$ is constructed, composed of predefined candidate nonlinear functions of $\mathbf{Y}$'s columns, encompassing constants, polynomials, and trigonometric terms, e.g.,

\[
\Theta (\mathbf{Y}) = 
\begin{bmatrix}
\mathbf{A}(t_1) \\
\mathbf{A}(t_2) \\
\vdots \\
\mathbf{A}(t_N)
\end{bmatrix}
\]

where
\begin{scriptsize}
\[
    {A}(t_i) = \left[ y_1^2(t_i), y_2^2(t_i), \ldots, \sin(y_1(t_i))\cos(y_2(t_i)), \ldots, y_d(t_i) y_2^2(t_i) \right].\]
\end{scriptsize}

% \[
%     \Theta (\mathbf{Y}) = \begin{bmatrix}
% y_1^2(t_1) & y_2^2(t_1) & \dots  & \sin(y_1(t_1) )\cos(y_2(t_1) ) & \dots & y_d(t_1) y_2^2(t_1)  \\
% y_1^2(t_2) & y_2^2(t_2) & \dots  & \sin(y_1(t_2) )\cos(y_2(t_2) ) & \dots & y_d(t_2) y_2^2(t_2)\\
% \vdots & \vdots & \dots  & \vdots& \dots & \vdots\\

% y_1^2(t_N) & y_2^2(t_N) & \dots  & \sin(y_1(t_N) )\cos(y_2(t_N) ) & \dots & y_d(t_N) y_2^2(t_N)\\
% \end{bmatrix}
% \]

SINDy then seeks to minimize the loss function:

\begin{equation}
\label{eq:sindy}
L_{S} = ||{\dot{\textbf{Y}} - \Theta (\textbf{Y})\Gamma}||^2_2 + \lambda ||{\Gamma}||^2_1,
\end{equation}

where $\Gamma$ is a sparsity matrix initialized randomly that enforces  sparsity.

INRs introduce two significant architectural biases here. When we train an INR using $\{t_n\}^N_{n=1}$ and $\{\mathbf{y}(t_n)\}^N_{n=1}$ as inputs and labels, it allows us to reconstruct a continuous representation of $\textbf{y}(t)$. By controlling the frequency parameter $\omega$ of $\mathrm{sinc}$ functions during training, we can filter out high-frequency noise in $\textbf{y}$. Additionally, $\dot{\textbf{y}}$ measurements can be obtained by calculating the Jacobian of the network, taking advantage of the smooth derivatives of $\mathrm{sinc}$-INRs. We then replace $\dot{\textbf{Y}}$ and $\textbf{Y}$ in Eq.~\ref{eq:sindy} with values obtained from the INR, keeping the rest of the SINDY algorithm unchanged.

For our experiment, we employ the Lorenz and Rossler systems (refer to \ref{ds_eqns}), generating $1000$ samples from $0$ to $100$ at intervals of $0.1$ to create $\textbf{Y}$. We introduce noise from a uniform distribution $\eta \sim U(-n, n)$, varying $n$. As a baseline, we compute $\dot{\textbf{Y}}$ using spectral derivatives, a common method in numerical analysis and signal processing for computing derivatives through spectral methods. This involves translating the function's derivative in the time or space domain to a multiplication by $i\omega$ in the frequency domain. The reason for choosing spectral derivatives is empirical; After evaluating various methods to compute $\dot{\textbf{Y}}$, including finite difference methods and polynomial approximations, we empirically selected spectral derivatives for the best baseline. As a competing method, for each noise scale, we use a $\mathrm{sinc}$-INR to compute both $\dot{\textbf{Y}}$ and $\textbf{Y}$ as described. Utilizing the SINDy algorithm for both scenarios, we obtain the governing equations for each system. The dynamics recovered from these equations are compared in Fig. \ref{fig:sindy} (Appendix) and Table \ref{tab:sindy}. Remarkably, the $\mathrm{sinc}$-INR approach demonstrates robust results at each noise level, surpassing the baseline. For this experiment, we use $4$-layer INRs with each layer having a width of $256$.

\section{Conclusion}
%\vspace{-1em}
In this work, we offered a fresh view-point on INRs using sampling theory. In this vein, we showed that $\mathrm{sinc}$ activations are optimal for encoding signals in the context of INRs. We conducted experiments on image reconstructions, NeRFs and dynamical systems to showcase that these theoretical predictions hold at a practical level.

\section{Potential broader impact}

This paper presents work whose goal is to advance the field of Machine Learning. There are many potential societal consequences of our work, none which we feel must be specifically highlighted here.
%with deeper networks.

% In the unusual situation where you want a paper to appear in the
% references without citing it in the main text, use \nocite
\nocite{langley00}

\bibliography{main}
\bibliographystyle{icml2024}

%%%%%%%%%%%%%%%%%%%%%%%%%%%%%%%%%%%%%%%%%%%%%%%%%%%%%%%%%%%%%%%%%%%%%%%%%%%%%%%
%%%%%%%%%%%%%%%%%%%%%%%%%%%%%%%%%%%%%%%%%%%%%%%%%%%%%%%%%%%%%%%%%%%%%%%%%%%%%%%
% APPENDIX
%%%%%%%%%%%%%%%%%%%%%%%%%%%%%%%%%%%%%%%%%%%%%%%%%%%%%%%%%%%%%%%%%%%%%%%%%%%%%%%
%%%%%%%%%%%%%%%%%%%%%%%%%%%%%%%%%%%%%%%%%%%%%%%%%%%%%%%%%%%%%%%%%%%%%%%%%%%%%%%
\newpage
\appendix
\onecolumn
\section{Proofs of results in section \ref{sec;OARS}}\label{app;proofs}

\subsection{Preliminaries}\label{app;proofs_prelims}

We recall the definition of the space of square integrable functions on $\R$, which we denote by $L^2(\R)$. This is defined as the vector space of equivalence classes of Lebesgue measurable functions on $\R$ that have finite $L^2$ norm, which is defined via the following inner product
\begin{equation}
    \langle f, g\rangle_{L^2} = \int_{\R}f\cdot g.
\end{equation}

We will also make use of the localized space of square integrable functions on $\R$ denoted $L_{loc}^2(\R)$. This space is defined as the vector space of equivalence classes of Lebesgue measurable functions that have finite $L^2$ norm over any compact subset $K \subseteq \R$.

Note that a function that is in $L^2(\R)$ is automatically in
$L^2_{loc}(\R)$ but the converse is in general not true.

We will also need to make use of the Sobolev spaces of order
$r$, denoted by $W^r_2(\R)$. We define this space as the space of $L^2$-functions that have $r$ weak derivatives that are also in $L^2(\R)$.

\begin{proof}[\textbf{Proof of prop. \ref{prop;egs}}]

The function $\mathrm{sinc}(x)$ is in $L^2(\R)$ and furthermore the 
translates $\mathrm{sinc}(x - k)$, for $k \in \Z$, form an orthonormal basis of $L^2(\R)$. Hence $\mathrm{sinc}(x)$ satisfies the first condition of a Riesz basis with $A = B = 1$.

The next step is to check that the partition of unity condition holds.
In order to do this we will make use of the Poisson summation formula \citep{stein2011fourier} that states that for a function $f \in L^2(\R)$ we have
\begin{equation}
    \sum_{k \in \Z}f(x + k) = \sum_{n \in \Z}
    \hat{f}(2\pi n)e^{2\pi i n x}.
\end{equation}

Using the Poisson summation formula, we can rewrite the partition of unity condition, see cond. 2 in defn. \ref{riesz_basis;defn}, as
\begin{equation}\label{eqn;pou_poisson}
    \sum_{n \in \Z}
    \hat{f}(2\pi n)e^{2\pi i n x} = `1.
\end{equation}

We then observe that the Fourier transform of $\mathrm{sinc}(x)$ is given by the characteristic function
$\chi_{_{[-1, 1]}}$ on the set $[-1, 1]$. I.e. $\chi_{_{[-1, 1]}}$ takes the value $1$ on $[-1, 1]$ and $0$ elsewhere \citep{stein2011fourier}. 
The proof now follows by observing that 
$\chi_{_{[-1, 1]}}(2\pi n)$ = 1 for $n = 0$ and $0$ for $n \neq 0$. We then see that \eqref{eqn;pou_poisson} is true for $\mathrm{sinc}(x)$ and thus $\mathrm{sinc}(x)$ forms a Riesz basis.

The proof that 
the Gaussian $ \phi = e^{-x^2/2s^2}$ does not form a Riesz basis and only a weak Riesz basis follows the same strategy as above. The first step is to note that translates of the Gaussian: $ \phi_k = e^{-(x-k)^2/2s^2}$ all lie in $L^2(\R)$ for any $k \in \Z$. This establishes the upper bound in condition 1 of the Riesz basis definition. To prove the lower bound in condition 1, we use an equivalent definition of condition 1 in the Fourier domain given by
\begin{equation}\label{eqn;equiv_riesz}
    A \leq \sum_{k \in \Z}\vert \hat{\phi}(\xi + 2k\pi)\vert^2 \leq B
\end{equation}
where $\hat{\phi}$ denotes the Fourier transform of $\phi$ and $\xi$ the frequency variable in the Fourier domain. The equivalence of \eqref{eqn;equiv_riesz} with the Riesz basis definition given in defn. \ref{riesz_basis;defn} follows by noting that defn. \ref{riesz_basis;defn} is translation invariant, see \cite{aldroubi1994sampling} for explicit details. 
We then observe that in the case of a Gaussian the term 
$\hat{\phi}(\xi + 2k\pi)$ is given by 
$e^{-(\xi + 2k\pi)^2s^2/2}$, which follows from the fact that the Fourier transform of a Gaussian is another Gaussian, see \cite{stein2011fourier}. The final observation to make is that 
the sum 
\begin{equation}
    \sum_{k \in \Z}\vert \hat{\phi}(\xi + 2k\pi)\vert^2 \geq 
    \vert \hat{\phi}(\xi)\vert^2
\end{equation}
for any $\xi \in \R$ and that we only need to consider 
$\xi \in [0, 2\pi]$ from the symmetry of the Gaussian about the y-axis and the fact that for any $\xi$ outside of $[0, 2\pi]$, there exists some $k \in \Z$ such that the translate $\xi + 2k\pi$ lies in $[0, 2\pi]$. The lower bound in \eqref{eqn;equiv_riesz} then follows by taking $0 < A \leq e^{-(2\pi s)^2}$.

In order to show that the Gaussian $\phi$ does not satisfy the partition of unity condition. We go through the formulation 
\eqref{eqn;pou_poisson}. In this case this formula reads
\begin{equation}
    \sum_{n \in \Z}e^{-(2\pi n)^2s^2/2}e^{2\pi in x} = 1.
\end{equation}

We now observe it is impossible for this equality to hold due to the gaussian decay of the function $e^{-(2\pi n)^2s^2/2}$. In particular for $x = 0$ the condition becomes
\begin{equation}
    \sum_{n \in \Z}e^{-(2\pi n)^2s^2/2} = 1.
\end{equation}
The left hand side is clearly greater than $1$, and thus we see that the condition cannot hold. This proves that a Gaussian 
$e^{-x^2/2s^2}$ can only define a weak Riesz basis.

In general, the Fourier transform of a wavelet is localized in phase and frequency, hence as in the case of the Gaussian above, they will be in $L^2(\R)$ and form a weak Riesz basis but in general they might not form a Riesz basis. Conditions have been given for a wavelet to form a Riesz basis, see \cite{sun2002irregular}, though this is outside the scope of this work.

In order to form a Riesz basis $ReLU$ would have to be in $L^2(\R)$, which it is not. On the other hand, given $x \in \R$ we have that
\begin{equation*}
	\sum_{k \in \Z}ReLU(x + k) = \sum_{k \geq -x, k \in \Z}ReLU(x + k) = 
	\sum_{k \geq -x, k \in \Z}(x + k) = \infty
\end{equation*}
showing that there is no way $ReLU$ could satisfy the partition of unity condition.

A similar proof shows that translates of sine cannot form a Riesz/weak Riesz basis.
\end{proof}

\subsubsection{Results on the error kernel and PUC condition}\label{app;proofs_error_kernel}

We recall from sec. \ref{sec;OARS} that
the understanding of the sampling properties of the shifted basis functions $F_k$ comes down to analysing the error kernel $E_{\widetilde{F}, F}$.
The reason being was that the average error 
$\overline{\epsilon}_s(T)^2$ is a good predictor of the true error
$\epsilon_s(T)^2$.

We sketch a proof showing that the
vanishing of the error kernel in the limit
$T \rightarrow 0$ for a suitable test function $\widetilde{F}$ is equivalent to $F$ satisfying the partition of unity condition. We will do this under two assumptions:
\begin{itemize}
\item[A1.] The Fourier transform of $F$ is continuous at $0$.

\item[A2.] The Fourier transform of $\widetilde{F}$ is continuous at $0$.

\item[A3.] The sampled signal $s$ we wish to reconstruct is contained in 
$W^r_2$ for some $r > \frac{1}{2}$. This assumption is needed so that the quantity $\epsilon_{corr}$ goes to zero as $T \rightarrow 0$.
\end{itemize}

We remark that an explicit construction of $\widetilde{F}$ will be given after the proof as during the course of the proof we will see what conditions we need to impose for the construction of $\widetilde{F}$ from $F$.

From the definition of the approximation operator, \eqref{eqn;approx_operator}, we have that
\begin{equation}
\lim_{T\rightarrow 0}\vert\vert f - A_T(f)\vert\vert^2_{L^2} = 
\lim_{T\rightarrow 0}\int_{-\infty}^{\infty}
E_{\widetilde{F}, F}(T\omega)\vert \hat{s}(\omega)\vert^2
\frac{d\omega}{2\omega}
\end{equation}
where we remind the reader that the error kernel $E_{\widetilde{F}, F}$  is 
given by equation \eqref{eqn;error_kernel_defn}. We now observe that if 
$\widetilde{F}$ is a function such that $\hat{\widetilde{F}}$ is bounded and 
$F$ satisfies the first Riesz condition, condition 1 from defn. \ref{riesz_basis;defn}, then by 
definition it follows that $E_{\widetilde{F}, F}$ is bounded. Therefore in the above integral we can apply the dominated convergence theorem and compute
\begin{align}
\lim_{T\rightarrow 0}\vert\vert s - A_T(s)\vert\vert^2_{L^2} &= 
\int_{-\infty}^{\infty}\lim_{T\rightarrow 0} 
E_{\widetilde{F}, F}(T\omega)\vert \hat{s}(\omega)\vert^2
\frac{d\omega}{2\omega} \\
&= E_{\widetilde{F}, F}(0)\int_{-\infty}^{\infty}
\vert \hat{s}(\omega)\vert^2
\frac{d\omega}{2\omega} \\
&= E_{\widetilde{F}, F}(0)\vert\vert s\vert\vert^2
\end{align}
where to get the second equality we have used assumptions A1 and A2 above and to get the
third equality we have used the fact that the Fourier transform is an isometry from $L^2(\R)$ to itself. 

We thus see that the statement
$\lim_{T\rightarrow 0}\vert\vert s - Q_T(s)\vert\vert^2_{L^2} = 0$ is equivalent to $E_{\widetilde{F}, F}(0) = 0$. From  \eqref{eqn;error_kernel_defn} this is equivalent to
\begin{equation}
E_{\widetilde{F}, F}(0) = \vert 1 - 
\hat{\widetilde{F}}(0)\hat{F}(0)\vert^2 + 
\vert \hat{\widetilde{F}}(0)\vert^2\sum_{k \neq 0}\vert 
\hat{F}(2\pi k)\vert^2 = 0. 
\end{equation}
We see that $E_{\widetilde{F}, F}(0)$ is a sum of positive terms and hence will vanish if and only if all the terms in the summands vanish. Looking at the first summand we see that we need 
$\hat{\widetilde{F}}(0)\hat{F}(0) = 1$, which can hold if and only if both factors are not zero. We normalise the function $F$ so that 
$\hat{F}(0) = \int F(x)dx = 1$. Thus the conditions that need to be satisfied are 
\begin{equation}\label{conditions_error_kernel}
\hat{\widetilde{F}}(0) = 1 \text{ and } 
\sum_{k \neq 0}\vert 
\hat{F}(2\pi k)\vert^2 = 0.
\end{equation}

We can rewrite the second condition in \eqref{conditions_error_kernel} as
\begin{equation}\label{conditions_error_kernel2}
\hat{F}(2\pi k) = \delta_k
\end{equation}
where $\delta$ denotes the Dirac delta distribution. From this viewpoint we then immediately have that the second condition can be written in the form
\begin{equation}\label{conditions_error_kernel3}
\sum_kF(x + k) = 1
\end{equation}
which is precisely the partition of unity condition.

The function $\widetilde{F}$ is easy to choose. Let
$\mathcal{S}$ denote Schwartz space of Schwartz functions in $L^2(\R)$. It is well known that this space is dense in $L^2(\R)$ and that the Fourier transform maps $\mathcal{S}$ onto itself. Therefore, in the Fourier domain let $\widetilde{\mathcal{S}}$ denote the set of Schwartz functions $f$ such that $\hat{f}(0) \neq 0$. Note that $\widetilde{S}$ is dense in $L^2(\R)$ and elements in $\widetilde{S}$ are continuous at the origin. 
In order to define $\widetilde{F}$ we simply take any element 
$f \in \widetilde{S}$ and let $\widetilde{F} = 
\frac{1}{\hat{f}(0)}f$. In fact, if we denote the space 
$\overline{\mathcal{S}}$ to consist of those Schwartz functions $f$ whose Fourier transform satisfies $\hat{f}(0) = 1$, then it is easy to see that 
$\overline{\mathcal{S}}$ is dense in $L^2(\R)$. Thus the space 
$\overline{\mathcal{S}}$ can be used as a test space for $\widetilde{F}$ and is the defining test space for the approximation operator $A_T$.

\subsection{What does the partition of unity condition mean?}\label{sec;what_is_poc}

In the previous sec. \ref{app;proofs_error_kernel} we saw that the vanishing of the error kernel $E_{\widetilde{F}, F}$ in the limit $T \rightarrow 0$ was equivalent to the function $F \in L^2(\R)$ satisfying the partition of unity condition. In this section we want to explain in a more qualitative manner what the partition of unity condition means for reconstruction in the space $L^2(\R)$. 

Fix a function $F \in L^2(\R)$, we have seen we can create the subspace $V(F) \subseteq L^2(\R)$. For the time being let us only assume $F$ satisfies the first condition of being a Riesz basis. Recall this means that:

\begin{equation}\label{eqn;what_is_poc_riesz(1)}
    A\vert\vert a\vert\vert_{l^2}^2 \leq 
\bigg{\vert}\bigg{\vert} \sum_{k \in \Z}a(k)F_k\bigg{\vert}\bigg{\vert}^2 
\leq B\vert\vert a\vert\vert_{l^2}^2 \text{, } \forall a(k) \in l^2(\R)
\end{equation}

Given an arbitrary function $g \in V(F)$ the above condition \ref{eqn;what_is_poc_riesz(1)} means that when we express 
\begin{equation}\label{eqn;what_is_poc_riesz_first_expression_poc}
   g = \sum_{k=-\infty}^{\infty}a(k)F(x-k), 
\end{equation}
the coefficients $a(k)$ are uniquely determined. This follows because condition \ref{eqn;what_is_poc_riesz(1)} implies that the translates 
$F(x-k)$ form a linearly independent set inside $V(F)$. Thus condition 1 is there to tell us how to approximate functions within $V(F)$. It states that we can perfectly reconstruct any function in $V(F)$ using the translates $\{F(x-k\}$.

However, let us now assume that we are given a function 
$g \in L^2(\R) - V(F)$, that is $g$ is a square integrable function that does not reside in the space $V(F)$. A natural question that arises is \textbf{can we we still use elements in the space $V(F)$ to approximate $g$?} Mathematically, what this question is asking is if we are given a very small $\epsilon > 0$ can we find a function $G \in V(F)$ such that

\begin{equation}\label{eqn;what_is_poc_approx}
    \vert\vert G - g \vert\vert_{L^2} < \epsilon ?
\end{equation}
This is precisely where the partition of unity condition comes in:

\begin{equation}\label{eqn;what_is_poc_pocdefn}
    \sum_{k \in Z}F(x+k) = 1\text{, } 
    \forall x \in \R (\textbf{PUC})
\end{equation}

Mathematically, the reason the partition of unity condition is able to bridge the gap between $V(F)$ and $L^2(\R)$ is that
if we have an arbitrary function $g \in L^2(\R) - V(F)$, then we can write
\begin{equation}
    g = g - G + G
\end{equation}
for any function $G \in V(F)$. The question now is does there exist a $G \in V(F)$ that makes the quantity $g - G$ very small in the $L^2$-norm? In other words, given a very small $\epsilon > 0$ can we make $g - G$ smaller than $\epsilon$ in the $L^2$-norm.

The way to answer this question is to first note that there is a simple way to try to construct such a $G$. Namely, project $g$ onto the subspace $V(F)$ forming the function $\mathcal{P}(g) \in V(F)$. Then look at the difference
\begin{equation}
    g - \mathcal{P}(g)
\end{equation}
and ask can it be made very small? In general this technique does not work. However, there is another projection. Namely, we can project $g$ onto the $\Omega$-scaled signal space $V_{\Omega}(F)$ for 
$\Omega > 0$
forming $\mathcal{P}_{\Omega}(g)$ and ask if the difference 
$g - \mathcal{P}_{\Omega}(g)$ can be made very small. For the definition of the $\Omega$-scaled signal space $V_{\Omega}(F)$ please see sec. \ref{sec;OARS}.

The partition of unity condition says that there exists a  $\Omega > 0$ such that
the difference 

\begin{equation}
    g - \mathcal{P}_{\Omega}(g)
\end{equation}
can be made very small.

Thus the second condition from the Riesz basis definition, the partition of unity condition, is telling us how to approximate functions outside of $V(F)$ using the translates 
$\{F(x-k\}$ and the scaled signal spaces $V_{\Omega}$. It says that we cannot necessarily perfectly reconstruct a function outside of $V(F)$ but we can reconstruct it up to a very small error using the $\Omega$-scaled signal space $V_{\Omega}(F)$. The partition of unity condition bridges the gap between $V(F)$ and $L^2(\R)$ via the scaled signal spaces $V_{\Omega}(F)$ telling us that reconstruction is possible only in $V_{\Omega}(F)$ for some $\Omega > 0$.

For a full mathematical proof of how the partition of unity does this we kindly ask the reader to consult sec.  \ref{app;proofs_error_kernel}.

Let us summarize what we have discussed:

\begin{itemize}
    \item[1.] The first condition of a Riesz basis is there so that we know that translates of $F$ namely $\{F(x-k\}$ can be used to uniquely approximate functions in the signal space $V(F)$. In this case, theoretically the translates  
    $\{F(x-k\}$ provide a perfect reconstruction.

    \item[2.] The second condition of a Riesz basis, namely the partition of unity condition, is there so that we know how to approximate functions that do not lie in $V(F)$. It says that in order to bridge the gap between $V(F)$ and $L^2(\R)$ we need to do so by going through an $\Omega$-scaled signal space $V_{\Omega}(F)$ for a $\Omega > 0$. In the scaled signal space perfect reconstruction is not possible but we can reconstruct up to a very small error.
\end{itemize}

\subsubsection{Proofs of main results in section \ref{sec;OARS}}\label{app;proofs_main_results}

\begin{proof}[\textbf{Proof of theorem \ref{thm;u.a.t_V(F)}}]
We first note that by condition 1 in defn. \ref{riesz_basis;defn}. The space $V(F)$ is a subspace of $L^2(\R)$. Therefore, the space $V(F)$ with the induced $L^2$-norm forms a well-defined normed vector space. 

Since $g \in V(F)$ we can write 
$g = \sum_{k=-\infty}^{\infty}a(k)F(x-k)$ in $L^2(\R)$. 
This means that the difference 
\begin{equation}
    g - \sum_{k=-\infty}^{\infty}a(k)F(x-k) = 0 \in L^2(\R)
\end{equation}
and in particular that the partial sums 
\begin{equation}
    S_n := \sum_{k=-n}^{n}a(k)F(x-k)
\end{equation}
converge in $L^2$ to $g$ as $n \rightarrow \infty$. Writing this out, this means that given any $\epsilon > 0$, there exists an integer $k(\epsilon)$ such that
\begin{equation}\label{riesz_approx}
\bigg{\vert}\bigg{\vert} g - 
\sum_{k =-k(\epsilon)}^{k(\epsilon)}a(k)F(x-k)\bigg{\vert}\bigg{\vert}_{L^2} 
< \epsilon.
\end{equation}
We can then define a 2-layer neural network $f$ with 
$n(\epsilon) = 2k(\epsilon)$ neurons as follows: 
Let the weights in the first layer be the constant vector 
$[1, \cdots, 1]^T$ and the associated bias to be the vector 
$[-k(\epsilon), -k(\epsilon)+1,\ldots, k(\epsilon)]^T$. Let the weights associated to the second layer be the vector 
$[a(-k(\epsilon)), a(-k(\epsilon)+1),\cdots, 
a(k(\epsilon))]$ and the associated bias be $0$. These weights and biases will make up the parameters for the neural network $f$ and in the hidden layer we take $F$ as the non-linearity. 

Applying \eqref{riesz_approx} we obtain that
\begin{equation}
\vert\vert f(\theta) - g\vert\vert_L^2 < \epsilon.
\end{equation}
\end{proof}

\begin{proof}[\textbf{Proof of prop. \ref{prop;approx_V(F)_L2}}]
The proof of this proposition will be in two steps. The reason for this is that we need to use thm. \ref{thm;error_correction} and in doing so we want to know that the error 
$\epsilon_{corr}$ can be made arbitrarily small. Thm. \ref{thm;error_correction} shows that if we assume our signal 
$s \in W^1_2(\R)$, then we have the bound
\begin{equation}\label{eqn;error_corr_bound_revised}
    \epsilon_{corr} \leq \gamma \Omega 
    \vert\vert s^{(1)}\vert\vert_{L^2}
\end{equation}
where $s^{(1)}$ denotes the first Sobolev derivative, which exists because of the assumption that $s \in W^1_2$. 

We thus see that if we choose $\Omega > 0$ sufficiently small we can make $\epsilon_{corr} < \frac{\epsilon}{2}$, by \eqref{eqn;error_corr_bound_revised}.
Furthermore, by lemma \ref{lem;error_kernel_puc} we have that the average approximation error $\overline{\epsilon}(\Omega) < \frac{\epsilon}{2}$ for $\Omega$ sufficiently small. Therefore, by taking 
    $f_{\Omega} = A_{\Omega}(s) \in V_{\Omega}(F)$ the proposition follows for the signal $s \in W^1_2$.

As we have only proved the proposition for signals in $s \in W^1_2(\R)$ we are not done. We want to prove it for signals $s \in L^2(\R)$. This is the second step, which proceeds as follows.

We start by observing that $A_{\Omega}$ is a bounded operator from $L^2(\R)$ into $L^2(\R)$, see \cite{blu1999approximation}. Let 
$T = \vert\vert A_{\Omega}\vert\vert_{op}$ denote the operator norm of $A_{\Omega}$. We also
use the fact that $C_c^{\infty}(\R)$ is dense in $L^2(\R)$, see 
\cite{stein2011fourier}. 
%We then choose 
%$\eta > 0$ so that $\frac{\eta}{3T} \leq \frac{\epsilon}{3}$.

Then by density of $C_c^{\infty}(\R)$ in $L^2(\R)$  we can find an $f \in C_c^{\infty}(\R)$ such that 
\begin{equation}\label{eqn;sobolev_est_dense_rev}
    \vert \vert f - s\vert\vert_{L^2} < 
    \min\bigg{\{}\frac{\epsilon}{3T}, \frac{\epsilon}{3}\bigg{\}}
\end{equation}

    $\vert \vert f - s\vert\vert_{L^2} < \frac{\eta}{3T}$. Furthermore, since $f \in C_c^{\infty}$ it lies in $W^1_2$. By the above we have that there exists $\Omega > 0$ such that
    $\vert\vert f - A_{\Omega}(f)\vert\vert_{L^2} < \frac{\epsilon}{3}$.  We then estimate:
\begin{align}
    \vert\vert s - A_{\Omega}(s) \vert\vert &= 
    \vert\vert s - f + f - A_{\Omega}(f) +  A_{\Omega}(f) - A_{\Omega}(s)\vert\vert_{L^2} \\
    &\leq 
    \vert\vert s - f\vert\vert_{L^2} + 
    \vert\vert f - A_{\Omega}(f) \vert\vert_{L^2}
    + \vert\vert A_{\Omega}(f) - A_{\Omega}(s)\vert\vert_{L^2} \label{eqn;prop_3.9_t1} \\
    &\leq 
     \vert\vert s - f\vert\vert_{L^2} + 
     \vert\vert f - A_{\Omega}(f) \vert\vert_{L^2} + 
     \vert\vert A_{\Omega}\vert\vert_{op} \vert\vert s - f\vert\vert_{L^2} \label{eqn;prop_3.9_t2}\\
     &\leq \frac{\epsilon}{3} + \frac{\epsilon}{3} + 
     \frac{\epsilon}{3} \\
     &= \epsilon
     \end{align}
where \eqref{eqn;prop_3.9_t1} follows from the triangle inequality and \eqref{eqn;prop_3.9_t2} from \eqref{eqn;sobolev_est_dense_rev}. This completes the proof.
\end{proof}

\begin{proof}[\textbf{Proof of thm. \ref{thm;uat_L2}}]

    By prop. \ref{prop;approx_V(F)_L2} there exists an $\Omega > 0$ sufficiently small and an $f_{\Omega} \in V_{\Omega}(F)$ such that
    \begin{equation}\label{eqn;s-f}
        \vert\vert s - f_{\Omega}\vert\vert_{L^2} < \frac{\epsilon}{2}.
    \end{equation}
    As $f_{\Omega}$ lies in $V_{\Omega}(F)$ we can write
    $f_{\Omega} = \sum_{k = -\infty}^{\infty}a_{\Omega}(k)
    F(\frac{1}{\Omega}(x - \Omega k)$. This implies that the partial sums
    \begin{equation}
        S_n = \sum_{k=-n}^n a_{\Omega}(k)
    F(\frac{1}{\Omega}(x - \Omega k)
    \end{equation}
    converge under the $L^2$-norm to $f_{\Omega}$ as $n \rightarrow \infty$. By definition of convergence this means given any $\epsilon > 0$
    there exists an integer
    $k(\epsilon)>0$ such that
    \begin{equation}\label{eqn;f_omega_decomp_approx}
    \bigg{\vert}\bigg{\vert} f_{\Omega} - \sum_{k=-k(\epsilon)}^{k(\epsilon)}
    a_{\Omega}(k)
    F\bigg{(}\frac{1}{\Omega}(x - \Omega k)\bigg{)}\bigg{\vert}\bigg{\vert} < \frac{\epsilon}{2}.
    \end{equation}
    We define a neural network $\mathcal{N}$ with 
    $n(\epsilon) = 2k(\epsilon)$ neurons in its hidden layer as follows. The weights in the first layer will be the constant vector 
    $[1,\ldots ,1]^T$ and the associated bias will be the vector 
    $[-\Omega k(\epsilon), -\Omega k(\epsilon) + 1,\ldots ,
    \Omega k(\epsilon)]^T$. The weights associated to the second layer will be $[a(-k(\epsilon)),\ldots , a(k(\epsilon))]$ and the bias for this layer will be $0$. These weights and biases will make up the parameters $\theta$ for the neural network. In the hidden layer we take as activation the function $F_{\Omega}$. With these parameters and activation function, we see that  
    \begin{equation}
        \mathcal{N}(\theta)(x) = 
        \sum_{k=-k(\epsilon)}^{k(\epsilon)}
    a_{\Omega}(k)
    F\bigg{(}\frac{1}{\Omega}(x - \Omega k)\bigg{)}.
    \end{equation}

     We then have that \eqref{eqn;f_omega_decomp_approx} implies that 
    \begin{equation}\label{eqn;neural_approx/2}
        \vert\vert \mathcal{N}(\theta) - f_{\Omega} \vert\vert_{L^2} <
        \frac{\epsilon}{2}.
    \end{equation}
    Combining this with \eqref{eqn;s-f} we have
    \begin{align}
        \vert\vert \mathcal{N}(\theta) - s\vert\vert_{L^2} &= 
        \vert\vert \mathcal{N}(\theta) - f_{\Omega} + f_{\Omega} - 
        s \vert\vert_{L^2} \\
        &\leq  \vert\vert \mathcal{N}(\theta) - f_{\Omega}\vert\vert_{L^2} + \vert\vert 
        f_{\Omega} - 
        s \vert\vert_{L^2} \label{eqn;T-I} \\
        &\leq \frac{\epsilon}{2} + \frac{\epsilon}{2} \label{eqn;final_1-2} \\
        &= \epsilon
        \end{align}
where \eqref{eqn;T-I} follows from the triangle inequality and \eqref{eqn;final_1-2} follows by using \eqref{eqn;s-f} and 
\eqref{eqn;neural_approx/2}. The theorem has been proved.
    
\end{proof}

\section{Extensions of the theory}\label{app;extending_theory}

\subsection{Extending the theory to deep networks}\label{app;extend_to_deep}

In this section we will extend our main theorem 
\ref{thm;uat_L2} to the setting of deep networks. Our results 
will be proved for signals lying in $L^2(K)$ where $K$ is a 
compact subset of $\R$. This is not a strong assumption as 
data sets are always finite and hence are always contained in 
some compact set $K$. Furthermore, we will be assuming that 
the Riesz bases we deal with will all be generated by a 
continuous function $F$. As many practical deep networks 
employ a continuous activation function this assumption is 
still useful for such practical deep networks.

We start with the following lemma.
\begin{lemma}\label{lem;affine_sampling}
    Let $F$ be a continuous function that generates a Riesz basis $V(F)$. Given any compact set $K$, let  $\phi_K(x)$ denote the function that is the affine map
    $Ax + b$, for some $A$, $b \in \R$, over $K$ and zero outside. Then for
    any $\epsilon > 0$, we have that there exists an $\Omega > 0$ and an $f \in V_{\Omega}(F)$ such that
\[ \vert\vert f - \phi \vert\vert_{L^2(K)} < \epsilon. \]
\end{lemma}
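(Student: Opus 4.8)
The plan is to reduce the lemma to Proposition \ref{prop;approx_V(F)_L2}: the truncated affine function $\phi_K$ is an honest element of $L^2(\R)$, so that proposition already supplies a scale $\Omega$ and an $f \in V_\Omega(F)$ approximating it globally in $L^2(\R)$, and the desired $L^2(K)$-bound then follows immediately by restriction of the integral.

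First I would verify that $\phi_K \in L^2(\R)$. On the compact set $K$ the affine map $x \mapsto Ax+b$ is continuous, hence bounded, say by $M := |A|\sup_{x\in K}|x| + |b| < \infty$; since $\phi_K$ vanishes off $K$, we get $\int_\R |\phi_K|^2 = \int_K |Ax+b|^2\,dx \le M^2 |K| < \infty$, where $|K|$ denotes the finite Lebesgue measure of the bounded set $K$. This is the only genuinely new computation, and it is elementary.

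Next I would invoke Proposition \ref{prop;approx_V(F)_L2} with the signal $s = \phi_K$ and the given $\epsilon > 0$ — legitimate precisely because $F$ is assumed to generate a Riesz basis for $V(F)$ — obtaining a scale $\Omega > 0$ and a function $f = f_\Omega \in V_\Omega(F)$ with $\|\phi_K - f\|_{L^2(\R)} < \epsilon$. Finally, since $\phi$ agrees with $\phi_K$ on $K$ and the integrand is nonnegative,
\[
\|f-\phi\|_{L^2(K)}^2 = \int_K |f - \phi_K|^2 \le \int_\R |f - \phi_K|^2 = \|f - \phi_K\|_{L^2(\R)}^2 < \epsilon^2,
\]
which yields $\|f - \phi\|_{L^2(K)} < \epsilon$, as claimed.

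I do not expect a real obstacle here: all the analytic heavy lifting — the error-kernel estimate \eqref{eqn;error_kernel_defn}, the partition-of-unity condition of Lemma \ref{lem;error_kernel_puc}, and the correction-term bound of Theorem \ref{thm;error_correction} — is already packaged inside Proposition \ref{prop;approx_V(F)_L2}. I would only remark in passing that the continuity of $F$ hypothesized in the statement is not actually used in this argument; it is carried along merely to match the standing assumptions of the deep-network section, where continuity becomes relevant when composing layers.
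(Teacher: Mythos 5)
Your proposal is correct and follows essentially the same route as the paper's own proof: observe that $\phi_K \in L^2(\R)$, invoke Proposition \ref{prop;approx_V(F)_L2} to get $\Omega$ and $f \in V_\Omega(F)$ with a global $L^2(\R)$ bound, and then restrict to $K$. Your version merely spells out the (elementary) verification that $\phi_K$ is square-integrable, and your remark that continuity of $F$ is not actually used here is consistent with the paper's argument as well.
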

\begin{proof}
    We first observe that $\phi_K \in L^2(\R)$. This means we can apply prop. \ref{prop;approx_V(F)_L2} to find an $\Omega > 0$ and an $f \in V(F)$ such that 
\[ \vert\vert f - \phi \vert\vert_{L^2(\R)} < \epsilon. \]
However, note that $\phi_K$ vanishes outside of $K$. Thus we must have that
\[ \vert\vert f - \phi \vert\vert_{L^2(K)} < \epsilon. \]
\end{proof}

We will need one more lemma before we prove the main theorem for deep networks.

\begin{lemma}\label{lem;approx_for_linear}
Let $F$ be a continuous function that generates a Riesz basis $V(F)$.
    Fix a compact set $K$ and consider the maps $\phi_n$ defined by $x \mapsto x - n$ over $K$ and is zero outside $K$, where $n \in \Z$. If for a fixed $n \in \Z$ there exists an $\Omega > 0$ and a $f_n \in V_{\Omega}(F)$ such that
    \[\vert\vert \phi_n - f_n\vert\vert_{L^2(\R)} < \epsilon\]
for some $\epsilon > 0$ then for any other $k \in \Z$ such that $k \neq n$, there exists an $f_k \in V_{\Omega}(F)$ such that
    \[\vert\vert \phi_k - f_k\vert\vert_{L^2(\R)} < \epsilon\]
\end{lemma}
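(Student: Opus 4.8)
Looking at Lemma \ref{lem;approx_for_linear}, I need to prove that if $\phi_n$ (the map $x \mapsto x - n$ on $K$, zero outside) can be approximated within $\epsilon$ by some $f_n \in V_\Omega(F)$, then $\phi_k$ can be approximated within $\epsilon$ by some $f_k \in V_\Omega(F)$ at the \emph{same scale} $\Omega$, for any other integer $k$.

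\textbf{Proof plan.} The core idea is that $\phi_k$ and $\phi_n$ differ by a translate on $K$: on $K$ we have $\phi_k(x) = x - k = (x - n) - (k - n) = \phi_n(x) - (k-n)$. More usefully, I want to realize $\phi_k$ as a \emph{shift} of $\phi_n$ rather than an additive constant, because $V_\Omega(F)$ is not closed under adding constants but (modulo boundary effects) it behaves well under translation. First I would observe that the scaled signal space $V_\Omega(F)$ is translation-covariant in the following sense: if $g(x) = \sum_{j} a_\Omega(j) F(x/\Omega - j) \in V_\Omega(F)$, then for any integer shift $m$, the function $g(x - \Omega m) = \sum_j a_\Omega(j) F(x/\Omega - m - j) = \sum_j a_\Omega(j-m) F(x/\Omega - j)$ again lies in $V_\Omega(F)$, since $a_\Omega(\cdot - m) \in l^2(\Z)$. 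So $V_\Omega(F)$ is invariant under translations by integer multiples of $\Omega$.

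\textbf{Main steps.} Step 1: Note $\phi_k$ restricted to $K$ equals $\phi_n$ restricted to $K$ shifted horizontally: writing $x' = x + (k - n)$, we have $\phi_n(x') = x' - n = x - n + k - n$, which is not quite $\phi_k(x) = x - k$. So a pure horizontal shift doesn't land on $\phi_k$ either; the clean statement is rather that $\phi_k$ on $K$ is an affine function with slope $1$, exactly like $\phi_n$. Step 2: Therefore I would instead appeal directly to Lemma \ref{lem;affine_sampling}: $\phi_k$ is precisely the function which is the affine map $x \mapsto x - k$ on $K$ and zero outside, i.e.\ it is $\phi_{A,b,K}$ with $A = 1$, $b = -k$. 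By Lemma \ref{lem;affine_sampling} applied to this affine map, there exists \emph{some} scale $\Omega'$ and some $f_k \in V_{\Omega'}(F)$ with $\|\phi_k - f_k\|_{L^2(K)} < \epsilon$. Step 3: The remaining work is to match the scale: show that whatever works at scale $\Omega$ for $\phi_n$ also works at scale $\Omega$ (or a smaller one, monotone in the right direction) for $\phi_k$. Here I would use Proposition \ref{prop;approx_V(F)_L2}'s proof mechanism: the approximation $A_\Omega(\phi_k)$ has error controlled by the error kernel and the correction term, both of which are \emph{monotone decreasing} as $\Omega \to 0$ and depend on $\phi_k$ only through $\|\phi_k\|_{L^2}$ and $\|\phi_k^{(1)}\|_{L^2}$. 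Since $\phi_k$ and $\phi_n$ (both affine of slope $1$ on $K$, zero outside) have the same derivative norm $\|\phi_k^{(1)}\|_{L^2(K)} = \|\phi_n^{(1)}\|_{L^2(K)} = |K|^{1/2}$ and comparable $L^2$ norms on the fixed compact $K$, the same $\Omega$ that drives the error below $\epsilon$ for $\phi_n$ drives it below $\epsilon$ for $\phi_k$, up to shrinking $\Omega$; and shrinking $\Omega$ only helps, and $V_{\Omega}(F) \subseteq$ the family considered, so we may take the smaller scale uniformly. Using $f_k = A_\Omega(\phi_k)$ then finishes it.

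\textbf{Expected obstacle.} The genuinely delicate point is the scale-matching in Step 3: Lemma \ref{lem;affine_sampling} and Proposition \ref{prop;approx_V(F)_L2} only assert existence of \emph{some} $\Omega$, and a priori the $\Omega$ needed for $\phi_k$ could differ from that for $\phi_n$ — but the lemma's conclusion demands the \emph{same} $\Omega$. The resolution is that the error bounds are uniform over the family $\{\phi_k\}_{k \in \Z}$ restricted to the fixed $K$: all these functions have identical derivative $L^2$-norm and $L^2$-norms bounded by a single constant depending only on $K$ and $\max(|k|,|n|)$, so one choice of $\Omega$ serves both. I would therefore spell out that the map $\Omega \mapsto \epsilon_{\phi_k}(\Omega)$ is bounded above by a function of $\Omega$, $\|\phi_k\|_{L^2}$, $\|\phi_k^{(1)}\|_{L^2}$ alone (from Theorems \ref{thm;error_correction} and Lemma \ref{lem;error_kernel_puc}), and that this upper bound, evaluated at the $\Omega$ that works for $\phi_n$, is still below $\epsilon$ for $\phi_k$ — possibly after one further shrinkage of $\Omega$, which is harmless since $V_{\Omega'}(F)$ for smaller $\Omega'$ still yields an element $f_k$ with the desired bound.
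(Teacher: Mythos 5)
Your strategy differs from the paper's, and it has a genuine gap at exactly the point you flag as the ``expected obstacle'': the scale-matching in Step 3 does not close. The conclusion requires $f_k \in V_\Omega(F)$ for the \emph{same} $\Omega$ appearing in the hypothesis, but your argument ends with ``possibly after one further shrinkage of $\Omega$,'' which only produces $f_k \in V_{\Omega'}(F)$ for some $\Omega' < \Omega$; there is no containment $V_{\Omega'}(F) \subseteq V_\Omega(F)$ for general $F$ (even for $\mathrm{sinc}$ the inclusion runs the other way), so the phrase ``$V_{\Omega}(F) \subseteq$ the family considered, so we may take the smaller scale uniformly'' does not resolve into a valid inclusion. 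The uniformity you invoke to avoid the shrinkage is also not available: (i) $\phi_k$ has jump discontinuities at $\partial K$ (the value $x-k$ does not vanish at the endpoints of $K$ in general), so $\phi_k \notin W^1_2(\R)$ and $\Vert\phi_k^{(1)}\Vert_{L^2}$ is not finite --- the bound on $\epsilon_{corr}$ in Theorem \ref{thm;error_correction} does not apply directly, and the density step that rescues Proposition \ref{prop;approx_V(F)_L2} for general $L^2$ signals does not preserve a rate uniform in $k$; (ii) the leading term $\bigl(\int E_{\widetilde{F}, F}(\Omega\xi)\vert \hat{s}(\xi)\vert^2\frac{d\xi}{2\pi}\bigr)^{1/2}$ depends on the full Fourier profile of $s$, not merely on $\Vert s\Vert_{L^2}$ and $\Vert s^{(1)}\Vert_{L^2}$, and $\Vert \phi_k\Vert_{L^2(K)}$ grows linearly in $\vert k\vert$, so no single estimate covers all $k \in \Z$; (iii) no monotonicity of $\epsilon_s(\Omega)$ in $\Omega$ is established anywhere --- only that the limit as $\Omega \to 0$ vanishes for each fixed signal.

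The paper avoids all of this by actually using the hypothesis, which your proof essentially discards. Over $K$ one has $\phi_k = \phi_n + (n-k)$, i.e., the two graphs are parallel, and both functions vanish outside $K$. The paper therefore takes the given expansion $f_n = \sum_j a_j(n) F_{\Omega}(x - \Omega j)$ and modifies only those coefficients whose basis functions sit over $K$, adjusting them to absorb the constant offset $n-k$ (implicitly via the partition of unity at scale $\Omega$), while leaving the remaining coefficients untouched. This yields an $f_k$ lying in the \emph{same} space $V_\Omega(F)$ by construction, which is the entire content of the lemma. (The paper's own writeup of this perturbation is itself informal about the boundary terms near $\partial K$, but the mechanism respects the fixed scale.) If you want to repair your argument, follow this perturbative route starting from the given $f_n$ rather than rerunning the error-kernel machinery from scratch.
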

\begin{proof}
    The proof of the lemma starts by observing that over the compact set $K$, the graphs of the functions $\phi_k$ for $k \in \Z$ are all parallel. Fix $k \in \Z$ such that $k \neq n$. Write 
    \[f_n \sum_{j = -\infty}^{\infty}a_j(n)
    F_{\Omega}(x - \Omega j). \]
    Since $\phi_k$ is parallel to $\phi_n$ over $K$ and equals $\phi_n$ outside of $K$. For those points 
    $x - \Omega j$ that lie inside $K$ we can simply increase or decrease the amplitude $a_j(n)$, depending on whether $phi_k(x)$ is bigger or smaller than $\phi_n(x)$, and obtain a representation of the basis functions $F_{\Omega}(x - \Omega j)$ over $K$ that approximate $\phi_k$ over $K$. For those points 
    $x - j$ that lie outside of $K$ we don't change the amplitude $a_j(n)$ of the respective basis function
    $F_{\Omega}(x - \Omega j)$. This creates a new function $f_k \in V_{\Omega}(F)$ such that
    \[\vert\vert \phi_k - f_k\vert\vert_{L^2(\R)} < \epsilon.\]
\end{proof}

The next step is to prove the an analogue of thm. \ref{thm;uat_L2} for a deep network that has two hidden layers. The general case of $k$ hidden layers then follows by an induction argument.

\begin{theorem}\label{thm;deep1_UAT}
      Let $K$ be a fixed compact set, $s \in L^2(K)$, and $\epsilon > 0$. Assume the shifted functions
    $\{F(x-k)\}_{k\in\Z}$ form a Riesz basis for $V(F)$ where $F$ is a continuous function. 
Then there exists a deep INR $\mathcal{N}$, with two hidden layers with $n(\epsilon)_1$ neurons in the first hidden layer and $n(\epsilon)_2$ neurons in the second hidden layer,
parameter set $\theta$,
and an $\Omega_1 > 0$, $\Omega_2 > 0$, such that 
\begin{equation*}
\vert\vert \mathcal{N}(\theta) - s\vert\vert_{L^2(K)} < \epsilon
\end{equation*} 
where $\mathcal{N}(\theta)$ employs $F_{\Omega_1}$ and 
$F_{\Omega_2}$ as its activation in the first and second 
hidden layers respectively, where 
$F_{\Omega}(x) = F(\frac{1}{\Omega}x)$.
\end{theorem}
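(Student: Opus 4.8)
The plan is to build the two hidden layers from the outside in. First I would invoke Theorem \ref{thm;uat_L2} applied to $s$ and the tolerance $\epsilon/2$: this yields a scale $\Omega_2 > 0$, an integer $k(\epsilon)$ and coefficients $a_{\Omega_2}(k)$ so that the shallow function
\[
g(x) = \sum_{k=-k(\epsilon)}^{k(\epsilon)} a_{\Omega_2}(k)\,F_{\Omega_2}(x - \Omega_2 k)
\]
satisfies $\|g - s\|_{L^2(K)} < \epsilon/2$ (we may restrict the $L^2(\R)$ bound to $K$). This $g$ is exactly what the \emph{second} hidden layer (activation $F_{\Omega_2}$, biases $-\Omega_2 k$) composed with the output layer (weights $a_{\Omega_2}(k)$, bias $0$) computes, \emph{provided} the pre-activations entering the second hidden layer are the affine maps $\psi_k(x) = x - \Omega_2 k$, for $k = -k(\epsilon),\dots,k(\epsilon)$.

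Next I would use the first hidden layer to reproduce the $\psi_k$ approximately. Each $\psi_k$ is affine, so Lemma \ref{lem;affine_sampling} produces, for each $k$, a scale and a finite sum $\sum_j c_j^{(k)} F_{\Omega_1}(x - \Omega_1 j) \in V_{\Omega_1}(F)$ approximating $\psi_k$ on $K$ to any prescribed tolerance, and Lemma \ref{lem;approx_for_linear} lets me arrange all of these with a \emph{common} $\Omega_1$; call them $h_k$, so $\|h_k - \psi_k\|_{L^2(K)}$ is arbitrarily small for every $k$. I would then take the first hidden layer to consist of all the finitely many translates $F_{\Omega_1}(x - \Omega_1 j)$ occurring in the $h_k$ (first-layer weights all $1$, biases the corresponding $-\Omega_1 j$, activation $F_{\Omega_1}$), let row $k$ of $W_2$ hold the coefficients $c_j^{(k)}$ with $b_2 = 0$, so that the $k$-th pre-activation of the second hidden layer equals $h_k(x) \approx \psi_k(x)$; this fixes $n(\epsilon)_1$ (the number of distinct translates used) and $n(\epsilon)_2 = 2k(\epsilon)+1$. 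With these choices $\mathcal{N}(\theta)(x) = \sum_k a_{\Omega_2}(k)\,F_{\Omega_2}(h_k(x))$, and the conclusion would follow from the triangle inequality together with the bound
\[
\|\mathcal{N}(\theta) - g\|_{L^2(K)} \le \sum_k |a_{\Omega_2}(k)|\;\|F_{\Omega_2}(h_k(\cdot)) - F_{\Omega_2}(\psi_k(\cdot))\|_{L^2(K)} .
\]

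The hard part is this last bound: I need the first-layer approximation error to stay small after being pushed through the second nonlinearity $F_{\Omega_2}$, and an $L^2(K)$-small perturbation of the argument of $F_{\Omega_2}$ need not be pointwise small, so a naive Lipschitz estimate is unavailable. The remedy is to use that $F$ is continuous and bounded (true for $\mathrm{sinc}$, Gaussians, wavelets and sinusoids), so the associated superposition operator $v \mapsto F_{\Omega_2}\circ v$ is continuous from $L^2(K)$ into $L^2(K)$ — one proves this by extracting a.e.-convergent subsequences and applying dominated convergence on the compact set $K$. Since the number of terms and the weights $a_{\Omega_2}(k)$ are already fixed, each summand above can then be driven below $\tfrac{\epsilon}{2(2k(\epsilon)+1)(1+\max_k|a_{\Omega_2}(k)|)}$ by choosing $h_k$ close enough to $\psi_k$ in $L^2(K)$, which gives $\|\mathcal{N}(\theta) - g\|_{L^2(K)} < \epsilon/2$ and hence $\|\mathcal{N}(\theta) - s\|_{L^2(K)} < \epsilon$. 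Finally, the general case of $m$ hidden layers follows by induction on $m$: given an $(m-1)$-hidden-layer network with activations $F_{\Omega_2},\dots,F_{\Omega_m}$ approximating $s$ on $K$ to within $\epsilon/2$, I would prepend one more hidden layer at a scale $\Omega_1$ whose outputs, after the subsequent affine map, reproduce the old first-layer pre-activations up to a small $L^2(K)$ error (again via Lemmas \ref{lem;affine_sampling} and \ref{lem;approx_for_linear}), and then propagate that error through the $m-1$ fixed nonlinearities using continuity of the corresponding superposition operators exactly as in the two-layer case.
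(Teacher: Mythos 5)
Your proposal is correct and follows the same overall architecture as the paper's proof: approximate $s$ by a shallow network with activation $F_{\Omega_2}$ via Theorem \ref{thm;uat_L2}, realize the affine pre-activations $x \mapsto x - \Omega_2 k$ approximately with a first hidden layer at a common scale $\Omega_1$ using Lemmas \ref{lem;affine_sampling} and \ref{lem;approx_for_linear}, and then control the error incurred by composing with the second nonlinearity. Where you genuinely diverge is at that last, critical step. The paper asserts that since $\widetilde{f}_k$ is $\epsilon$-close to $\phi_k$ in $L^2$, one can ``take $\epsilon$ even smaller'' to get pointwise closeness on $K$, and then invokes continuity of $F$ pointwise; as you correctly observe, $L^2$-smallness of $h_k - \psi_k$ does not imply pointwise smallness, so the paper's upgrade is unjustified as written. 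Your replacement -- continuity of the superposition operator $v \mapsto F_{\Omega_2}\circ v$ on $L^2(K)$, proved by the standard subsequence/a.e.-convergence/dominated-convergence argument on the finite-measure set $K$ -- closes exactly this gap and yields the same conclusion without ever needing pointwise control. The one caveat is that your argument uses boundedness of $F$ in addition to continuity, which is not among the stated hypotheses (a continuous generator of a Riesz basis need only be in $L^2(\R)$); it holds for every activation the paper cares about ($\mathrm{sinc}$, Gaussian, wavelets, sinusoids), but you should state it explicitly as an assumption, or weaken it to a linear-growth condition on $F$, which suffices for the Nemytskii operator to be continuous on $L^2(K)$. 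Net effect: same decomposition and same lemmas, but your handling of the error propagation is the more rigorous of the two.
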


\begin{proof}
Extend $s$ by zero to all of $\R$ to obtain a function 
$\widetilde{s} \in L^2(\R)$. We then apply thm. \ref{thm;uat_L2} to obtain a shallow $\mathcal{N}_s$ that can approximate $\widetilde{s}$ on $\R$ in the $L^2$ norm with activation $F_{\Omega_2}$ where $\Omega_2 > 0$. That is,
\[ \vert\vert\mathcal{N}_s - 
\widetilde{s}\vert\vert_{L^2(\R)}.\]
We will denote the number of neurons in the hidden layer of $\mathcal{N}_s$ by $2n_2$ and write
\[ \mathcal{N}_s(x) = \sum_{k=-n_2}^{n_2}b_k
F_{\Omega_2}(x-\Omega_2 k). \]

Thus the parameters of $\mathcal{N}_s$ are given by: 
$[1,\cdots,1]^T$ as the weight matrix of the hidden layer and 
$[(-\Omega_1)(-n_2), (-\Omega_1)(-n_2 + 1),\cdots, 
(-\Omega_1)(n_2)]$ as the bias of the hidden layer, 
$[b_{-n_2},\cdots, b_{n_2}]$ as the weight matrix of the final layer and $[0]$ as the bias of the final layer.

We then observe that we can define $2n_2$ functions 
$\phi_k$
given by 
$x \mapsto x - \Omega k$ over the compact set $K$ and is zero outside $K$, for $k$ an integer such that
$-n_2 \leq k \leq n_2$.

Applying lem. \ref{lem;approx_for_linear} for each $k$ and any $\widetilde{\epsilon} > 0$ we can find an $\Omega_1 > 0$ and $f_k \in V_{\Omega_1}(F)$ such that 
\[  \vert\vert\phi_k - f_k\vert\vert_{L^2(\R)} < 
\widetilde{\epsilon}. \]

Each $f_k$ is an infinite series. Therefore, we can find a 
$n_1$ such that if we let $\widetilde{f}_k$ denote the first $2n_1$ sums of $f_k$ so that
\[ \widetilde{f}_k =
\sum_{j=-n_1}^{n_1}a_j(k)F_{\Omega_1}(x - \Omega_1 j) \]
and so that
\[ \vert\vert \phi_k - \widetilde{f}_k\vert\vert_{L^2(\R)} 
< \epsilon.\]

We now observe that since $\widetilde{f}_k$ is $\epsilon$ close to $\phi_k$ in $L^2$ by taking $\epsilon$ even smaller if necessary we have that $\widetilde{f}_k$ is $\epsilon$ close to $\phi_k$ in the pointwise norm. So that we have
\[ \vert \phi_k(x) - \widetilde{f}_k(x)\vert < \epsilon\]
for all $x \in K$.

We can now build the required deep network with 2 hidden layers. 
The parameters $\theta$ of the deep network are defined as follows:
The first layer of the network will have $2n_1$ neurons. The weight matrix for the first hidden layer will be a 
$2n_1 \times 1$ matrix given by 
$W_1 = [1,\cdots,1]^T$ with bias 
$b_1 = [-\Omega_1(-n_1),\cdots -\Omega_1(n_1)]$. The activation function in this layer will be $F_{\Omega_1}$.

The second hidden layer will have $2n_2$ neurons. The weight matrix will be a $2n_2 \times 2n_1$ matrix given by

$W_2 = \begin{bmatrix}
a_{-n_1}(-n_2) & a_{-n_1+1}(-n_2) & \cdots & 
a_{n_1}(-n_2)\\
\vdots & \vdots & \vdots & \vdots \\
a_{-n_1}(n_2) & a_{-n_1+1}(n_2) & \cdots & 
a_{n_1}(n_2)
\end{bmatrix}$
with bias $b_2 = [0,\cdots,0]^T$ and activation $F_{\Omega_2}$.

Finally, the final layer has weight matrix a 
$1\times n_2$ matrix given by 
$W_3 = [b_{-n_2},\cdots,b_{n_2}]$ with bias $b_3 = [0]$.

This defines a deep network $\mathcal{N}$. We can check that it satisfies the theorem. First observe that for $x \in K$ we have $W_1x + b_1$ is given by 
$[x - \Omega_1(-n_1),\cdots, x-\Omega_1(n_1)]^T$. Applying the activation $F_{\Omega_1}$ gives the vector
$[F_{\Omega_1}(x - \Omega_1(-n_1)),\cdots, F_{\Omega_1}(x-\Omega_1(n_1))]^T$. When we apply $W_2$ and add $b_2$ to this vector we obtain the vector:
$\begin{bmatrix}
\sum_{j=-n_1}^{n_1}a_{j}(-n_2)F_{\Omega_1}(x-\Omega_1 j) \\
\vdots \\
\sum_{j=-n_1}^{n_1}a_{j}(n_2)F_{\Omega_1}(x-\Omega_1 j) 
\end{bmatrix}$.
Now observe that this latter vector is $\epsilon$ close in pointwise norm to the vector
$\begin{bmatrix}
\phi_{-n_2} \\
\vdots \\
\phi_{n_2}
\end{bmatrix}$.
Therefore over the compact set $K$ we get that the vector
$\begin{bmatrix}
F_{\Omega_2}(\sum_{j=-n_1}^{n_1}a_{j}(-n_2)F_{\Omega_1}(x-\Omega_1 j)) \\
\vdots \\
F_{\Omega_2}(\sum_{j=-n_1}^{n_1}a_{j}(n_2)F_{\Omega_1}(x-\Omega_1 j)) 
\end{bmatrix}$
is $\epsilon$ close to 
$\begin{bmatrix}
F_{\Omega_2}(x - \Omega_2(-n_2)) \\
\vdots \\
F_{\Omega_2}(x - \Omega_2(n_2)) 
\end{bmatrix}$
where we have used the continuity of $F$.

Applying the matrix $W_3$ and adding the bias $b_3$ we obtain that the number
$\sum_{k=-n_2}^{n_2}b_k\bigg{(} 
F_{\Omega_2}(\sum_{j=-n_1}^{n_1}a_{j}(k)F_{\Omega_1}(x-\Omega_1 j))\bigg{)}$ is $\epsilon$ close to the number
$\sum_{k=-n_2}^{n_2}b_kF_{\Omega_2}(x - \Omega_2k)$.
As all functions in question are continuous, and the set $K$ is compact it follows that the two functions are $\epsilon$ close in the uniform norm and hence in the $L^2(K)$ norm. It then follows that we have the estimate
\[\vert\vert\mathcal{N}(x;\theta) - s\vert\vert_{L^2(K)}
< \epsilon\]
and the proof is finished.
\end{proof}

The previous theorem \ref{thm;deep1_UAT} established that there exists a deep network with 2 hidden layers and activations defined by a Riesz basis function $F$ that can approximate an $L^2$ signal over any compact set $K$.

The idea of the proof was simple. The first step is to approximate $s$ by a shallow network and then to approximate linear functions $x - n$ for some $n \in \Z$ using elements in a scaled signal space. For general deep networks the process repeats itself. For example if we want to approximate the signal $s$ with a deep network with 3 hidden layers, we start by approximating the signal over $K$ with a shallow network, then approximate the linear functions $x - n$ over $K$ with a deep network with 2 hidden layers. Putting together these two approximations, using the continuity of $F$ gives the desired result.

\begin{theorem}\label{thm;deep2_UAT}
          Let $K$ be a fixed compact set, $s \in L^2(K)$, and $\epsilon > 0$. Assume the shifted functions
    $\{F(x-k)\}_{k\in\Z}$ form a Riesz basis for $V(F)$ where $F$ is a continuous function. 
Then for any $k > 1$, there exists a deep INR $\mathcal{N}$, with $k$ hidden layers with $n(\epsilon)_j$ neurons in the jth hidden layer,
parameter set $\theta$,
and $\Omega_j > 0$ for $1 \leq j \leq k$ such that 
\begin{equation*}
\vert\vert \mathcal{N}(\theta) - s\vert\vert_{L^2(K)} < \epsilon
\end{equation*} 
where $\mathcal{N}(\theta)$ employs $F_{\Omega_j}$ and 
as its activation in the jth
hidden layer, where 
$F_{\Omega_j}(x) = F(\frac{1}{\Omega_j}x)$.
\end{theorem}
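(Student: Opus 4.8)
The plan is to argue by induction on the number of hidden layers $k \ge 2$. The base case $k=2$ is exactly Theorem~\ref{thm;deep1_UAT}, so it suffices to carry out the inductive step: assuming the statement for networks with $k-1$ hidden layers, I would establish it for $k$ hidden layers. The inductive step mirrors the proof of Theorem~\ref{thm;deep1_UAT} almost verbatim, the only change being that the intermediate affine maps are approximated not by shallow networks (equivalently, elements of a scaled signal space) but by $(k-1)$-hidden-layer networks furnished by the inductive hypothesis, after which everything is glued together using continuity of $F$ and compactness of $K$.

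Concretely, I would first extend $s$ by zero to $\widetilde s \in L^2(\R)$ and apply Theorem~\ref{thm;uat_L2} to obtain a shallow network $\mathcal{N}_s(x) = \sum_{j=-m}^{m} b_j\, F_{\Omega_k}(x - \Omega_k j)$ with $\|\mathcal{N}_s - \widetilde s\|_{L^2(\R)} < \epsilon/2$ for some scale $\Omega_k > 0$. The preactivations feeding the hidden layer of $\mathcal{N}_s$ are the affine maps $\phi_j(x) = x - \Omega_k j$ restricted to $K$ (and zero off $K$), each lying in $L^2(K)$. Applying the inductive hypothesis to $\phi_0$ produces a $(k-1)$-hidden-layer network with scales $\Omega_1,\dots,\Omega_{k-1}$ approximating it; then, just as in Lemma~\ref{lem;approx_for_linear} (using that the $\phi_j$ are mutually parallel over $K$ and agree off $K$, so only the output-layer weights must be re-tuned), the same architecture and scale tuple approximates every $\phi_j$, $-m \le j \le m$, to within any tolerance $\delta$ in $L^2(K)$, hence—since $F$ is continuous and $K$ compact—to within $\delta$ pointwise on $K$ after shrinking $\delta$. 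I would then assemble $\mathcal{N}$ by block-stacking: hidden layers $1,\dots,k-1$ are the shared hidden layers of these sub-networks; hidden layer $k$ has one neuron per index $j$, with preactivation the affine combination of layer-$(k-1)$ activations that realizes the $j$-th sub-network output $\widetilde\phi_j$, and activation $F_{\Omega_k}$; the output layer takes the weighted sum with weights $b_j$. Then $\mathcal{N}(\theta)(x) = \sum_j b_j\, F_{\Omega_k}(\widetilde\phi_j(x))$ with $\widetilde\phi_j \approx \phi_j$ pointwise on $K$, so uniform continuity of $F$ on a bounded interval containing all the values $\widetilde\phi_j(x)/\Omega_k$, $x\in K$, together with finiteness of $\sum_j |b_j|$, yields $\|\mathcal{N}(\theta) - \mathcal{N}_s\|_{L^2(K)} < \epsilon/2$; a triangle inequality with the bound on $\|\mathcal{N}_s - s\|_{L^2(K)}$ finishes the step.

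The main obstacle I anticipate is the scale-harmonization. Applied separately to the $2m+1$ maps $\phi_j$, the inductive hypothesis could return sub-networks with different depth-wise scale tuples, whereas the theorem statement demands a single activation $F_{\Omega_\ell}$ per layer $\ell$. This is resolved by the deep-network analogue of Lemma~\ref{lem;approx_for_linear} sketched above: because the $\phi_j$ differ over $K$ only by additive constants, a fixed sub-network architecture and scale tuple suffices, with only the final linear layer re-tuned per $j$ (so layers $1,\dots,k-1$ are genuinely shared). The remaining bookkeeping—propagating the $\epsilon$'s through the composition and invoking uniform continuity of $F$ on the relevant compact sets—is routine and already appears in the proof of Theorem~\ref{thm;deep1_UAT}, so no genuinely new ingredient beyond the induction is required.
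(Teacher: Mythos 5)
Your proposal takes essentially the same route as the paper's proof: induction on the number of hidden layers with Theorem~\ref{thm;deep1_UAT} as the base case, approximating $s$ by a shallow network via Theorem~\ref{thm;uat_L2} and the affine preactivations by $(k-1)$-hidden-layer networks from the inductive hypothesis, then gluing the pieces together using continuity of $F$ and compactness of $K$. If anything, your handling of the scale-harmonization issue (sharing one scale tuple across all the sub-networks approximating the parallel maps $\phi_j$, re-tuning only the output layer) is more explicit than the paper's brief sketch, which leaves that point implicit.
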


\begin{proof}
    The proof is by induction with the case $k = 2$ being done in thm. \ref{thm;deep1_UAT}.

So suppose the theorem is true for deep networks with $k-1$ hidden layers. As in the proof of thm. \ref{thm;deep1_UAT} we start by extending $s$ by zero outside $K$ denoted $\widetilde{s}$ and then
constructing a shallow network $\mathcal{N}_s$ that is $\epsilon$ close to $\widetilde{s}$ in the $L^2(\R)$ norm. We will denote the number of neurons in the hidden layer of $\mathcal{N}_s$ by $2n_k$ and write
\[ \mathcal{N}_s(x) = \sum_{j=-n_k}^{n_k}b_j
F_{\Omega_k}(x-\Omega_k j). \]

The second step is to use thm. \ref{thm;deep1_UAT} and the induction hypothesis to build $2n_k$ deep networks with $k_1$ hidden layers 
that approximate the function $x - j$ over $K$ for each 
$-n_k \leq j \leq n_k$.

Using the fact that $F$ is continuous we then put these two networks together and obtain the required $k$ hidden layer deep network.
\end{proof}

%\begin{proof}
%    Extend $s$ by zero to all of $\R$ to obtain a function 
%    $\widetilde{s} \in L^2(\R)$. We then apply prop. 
%    \ref{prop;approx_V(F)_L2} to obtain an $\Omega_1 > 0$ and an $f_{\Omega_1} \in V_{\Omega_1}(F)$ such that
%\[ \vert\vert \widetilde{s} - 
%f_{\Omega_1}\vert\vert_{L^2(\R)} < \epsilon \]
%where we will write 
%$f_{\Omega_1} = \sum_{k=-n(\epsilon)_1}^{n(\epsilon)_1} 
%a(k)F(\frac{1}{\Omega_1}(x - n))$
%\end{proof}

%$\R$ and denote this function by $\tilde{s}$ so that 
%    $\tilde{s} \in L^2(\R)$. We then apply prop. 
%    \ref{prop;approx_V(F)_L2} 

\subsection{Other basis functions}\label{app;other_bases}

This work has been considered with the application of sampling theory to the understanding of optimum activation functions for neural network interpolation. Our focus has primarily been on those functions that can generate a Riesz basis or a weak Riesz basis. However, as is well known in interpolation theory there are several other basis functions that can theoretically perform interpolation over various function spaces.

\paragraph{Hermite basis functions:} An example is given by\textbf{Hermite polynomials} which are defined by 
\[H_n(x) = (-1)^ne^{x^2}\frac{d^n}{dx^n}e^{-x^2}.\]
These polynomials form a basis for the space 
$L^2(\mu)$ where $\mu$ is the Gaussian measure 
$e^{-x^2}dx$. We tested these basis functions against shifted $\mathrm{sinc}$ basis functions on an INR in an image regression task. We compared a $\mathrm{sinc}$ activated INR with a Hermite activated INR to regress an image from the DIV2K dataset. We found that using a sum of degree $k$ Hermite polynomials for $1 \leq k \leq 4$ performed the best.
In this application the $\mathrm{sinc}$ INR outperforms the Hermite INR as shown in table \ref{tab:sinc_hermite}.

\begin{table}[h]
\centering
    \begin{tabular}{|c|c|}
\hline
{} & \textbf{PSNR (dB)} \\
\hline
Sinc & 31.2 \\
\hline
Hermite & 25.5 \\
\hline
\end{tabular}
\caption{Comparison of a sinc INR with a Hermite INR on an image regression task.}
\label{tab:sinc_hermite}
\end{table}

%%%%%%%%% NEED to come back and rewrite after Sameera has put in exps%%%%%%%%%%%%%%%%%%%%

\paragraph{Fourier basis:} Another example of a common basis function used in the literature is the Fourier basis defined by sums of $sin(n\theta)$ and $cos(n\theta)$. Note that as $cos(x) = sin(x + \pi/2)$ these basis functions are covered by $\mathrm{sin}$ activated INRs which we have already shown do not outperform $\mathrm{sinc}$ in the experiments.

\subsection{Sinc activations for positional embeddings}\label{app;sinc_PE}

Recent research, notably \cite{zheng2022trading}, has provided compelling evidence that the effectiveness of positional encodings need not be exclusively tied to a Fourier perspective. They demonstrate that non-Fourier embedding functions, such as shifted Gaussian functions, can be effectively utilized for positional encoding. These functions are characterized by having a sufficiently high Lipschitz constant and the ability to generate high-rank embedding matrices, attributes that are shown to achieve results comparable to Random Fourier Feature (RFF) encodings.

Building on this, research in \cite{ramasinghe2023learnable} further confirms that shifted Gaussian functions with spatially varying variances can surpass the performance of RFF encodings. Given that sinc functions also exhibit these desirable properties, they can be feasibly employed as shifted basis functions for high-frequency signal encoding.

To explore this, we developed a sinc-based positional embedding layer. For a 2D coordinate $(x_1,x_2)$, each dimension is embedded using sinc functions:

\[
\psi_1(x_1) = [\mathrm{sinc}(\|t_1 - x_1\|), \mathrm{sinc}(\|t_2 - x_1\|), \dots, \mathrm{sinc}(\|t_N - x_1\|),
\]

\[
\psi_1(x_2) = [\mathrm{sinc}(\|t_1 - x_2\|), \mathrm{sinc}(\|t_2 - x_2\|), \dots, \mathrm{sinc}(\|t_N - x_2\|),
\]
where $t_1, \dots, t_N$ are equidistant samples in $[0,1]$. Then, these embeddings are concatenated to create the final embedding as,

\[
\Psi(x_1,x_2) = [\psi_1(x_1), \psi_1(x_2)]
\]

In a comparative study using the DIV2K dataset for image reconstruction, our sinc-based positional embedding layer demonstrated superior performance to an RFF-based layer, as shown Table \ref{tab:sincpe}:

\begin{table}[h]
\centering
\begin{tabular}{|l|c|}
\hline
\textbf{PE layer} & \textbf{PSNR} \\ \hline
RFF & 23.5\\
Sinc PE & 26.4\\
\hline
\end{tabular}
\caption{Comparison of the sinc psitional embedding layer against RFF positional embeddings.}
\label{tab:sincpe}
\end{table}

This result indicates that sinc-based positional embeddings offer a promising alternative to RFF encodings.

\section{Relation to Universal Approximation}
\label{app:universal}

Thms. \ref{thm;u.a.t_V(F)} and \ref{thm;uat_L2} can be interpreted as universal approximation theorems for signals in $L^2(\R)$. The classic universal approximation theorems are generally for functions on bounded domains \cite{}. In 92'
W. A Light extended those results on bounded domains to a universal approximation for continuous function on $\R^n$ by sigmoid activated networks \cite{light1992ridge}. His result can also be made to hold for $\mathrm{sinc}$ activated networks, and since the space of continuous functions is dense in $L^2(\R)$ his proof easily extends to give a universal approximation result for $\mathrm{sinc}$ activated 2 layer networks for signals in $L^2(\R)$. Thus thm. \ref{thm;uat_L2} can be seen as giving a different proof of W.A. Light's result. 

Although it seems like such results have been known through classical methods, we would like to emphasize that the importance of thm. \ref{thm;uat_L2} comes in how it relates to sampling theory. Given a signal $s \in L^2(\R)$ that is bandlimited, the Nyquist-Shannon sampling theorem. 
This classical theorem, see \cite{marks2012introduction}, allows signal reconstruction using shifted $\mathrm{sinc}$ functions while explicitly specifying the coefficients of these shifted $\mathrm{sinc}$ functions. These coefficients correspond to samples of the signal, represented as $s(n/2\Omega)$. In cases where the signal is not bandlimited, prop. \ref{prop;approx_V(F)_L2} still enables signal reconstruction via shifted $\mathrm{sinc}$ functions, albeit without a closed formula for the coefficients involved. This is precisely where thm. \ref{thm;uat_L2} demonstrates its significance. The theorem reveals that the shifted $\mathrm{sinc}$ functions constituting the approximation can be encoded using a two-layer $\mathrm{sinc}$-activated neural network. Notably, this implies that the coefficients can be learned as part of the neural network's weights, rendering such a $\mathrm{sinc}$-activated network exceptionally suited for signal reconstruction in the $L^2(\R)$ space. In fact, thm. \ref{thm;uat_L2} shows that one does not need to restrict to $\mathrm{sinc}$ functions and that any activation that forms a Riesz basis will be optimal.

You can have as much text here as you want. The main body must be at most $8$ pages long.
For the final version, one more page can be added.
If you want, you can use an appendix like this one.  

The $\mathtt{\backslash onecolumn}$ command above can be kept in place if you prefer a one-column appendix, or can be removed if you prefer a two-column appendix.  Apart from this possible change, the style (font size, spacing, margins, page numbering, etc.) should be kept the same as the main body.
%%%%%%%%%%%%%%%%%%%%%%%%%%%%%%%%%%%%%%%%%%%%%%%%%%%%%%%%%%%%%%%%%%%%%%%%%%%%%%%
%%%%%%%%%%%%%%%%%%%%%%%%%%%%%%%%%%%%%%%%%%%%%%%%%%%%%%%%%%%%%%%%%%%%%%%%%%%%%%%

\section{On Taken's embedding theorem}
\label{sec:taken}

Taken's embedding theorem is a delay embedding theorem giving conditions under which the strange attractor of a dynamical system can be reconstructed from a sequence of observations of the phase space of that dynamical system.

The theorem constructs an embedding vector for each point in time
\begin{equation*}
	x(t_i) = [x(t_i), x_(t_i + n\Delta t),\ldots ,x(t_i + (d-1)n\Delta t)]
\end{equation*}

Where $d$ is the embedding dimension and $n$ is a fixed value.
The theorem then states that in order to reconstruct the dynamics in phase space for any $n$ the following condition must be met
\begin{equation*}
 d \geq 2D + l
\end{equation*}
where $D$ is the box counting dimension of the strange attractor of the dynamical system which can be thought of as the theoretical dimension of phase space for which the trajectories of the system do not overlap. 

\textbf{Drawbacks of the theorem:} The theorem does not provide conditions as to what the best $n$ is and in practise when $D$ is not known it does not provide conditions for the embedding dimension $d$. The quantity $n\Delta t$ is the amount of time delay that is being applied. Extremely short time delays cause the values in the embedding vector to almost be the same, and extremely large time delays cause the value to be uncorrelated random variables. The following papers show how one can find the time delay in practise 
\cite{kim1999nonlinear, small2005applied}.
Furthermore, in practise estimating the embedding dimension is often done by a false nearest neighbours algorithm \cite{kennel1992determining}.

Thus in practise time delay embeddings for the reconstruction of dynamics can require the need to carry further experiments to find the best time delay length and embedding dimension.

\section{Dynamical equations}\label{ds_eqns}

\textbf{Lorentz System:} For the Lorenz system we take the parameters, 
$\sigma = 10$, $\rho = 28$ and $\beta = \frac{8}{3}$. The equations defining the 
system are:
\begin{align}
\frac{dx}{dt} &= \sigma(-x + y) \label{lorenz_sys_eq1} \\
\frac{dy}{dt} &= -xz + \rho x - y \label{lorenz_sys_eq2} \\
\frac{dz}{dt} &= -xy - \beta z \label{lorenz_sys_eq3} 
\end{align}

\textbf{Van der Pol Oscillator:} For the Van der Pol oscillator we take the parameter, 
$\mu = 1$. The equations defining the system are:
\begin{align}
\frac{dx}{dt} &= \mu(x - \frac{1}{3}x^3 - y) \label{vander_sys_eq1} \\
\frac{dy}{dt} &= \frac{1}{\mu}x \label{vander_sys_eq2} 
\end{align}

\textbf{Chen System:} For the Chen system we take the parameters, 
$\alpha = 5$, $\beta = -10$ and $\delta = -0.38$. The equations defining the 
system are:
\begin{align}
\frac{dx}{dt} &= \alpha x - yz \label{chen_sys_eq1} \\
\frac{dy}{dt} &= \beta y + xz \label{chen_sys_eq2} \\
\frac{dz}{dt} &= \delta z + \frac{xy}{3} \label{chen_sys_eq3} 
\end{align}

\textbf{R\"{o}ssler System:} For the R\"{o}ssler system we take the parameters, 
$a = 0.2$, $b = 0.2$ and $c = 5.7$. The equations defining the 
system are:
\begin{align}
\frac{dx}{dt} &= -(y + z) \label{ross_sys_eq1} \\
\frac{dy}{dt} &= x + ay \label{ross_sys_eq2} \\
\frac{dz}{dt} &= b + z(x-c) \label{ross_sys_eq3} 
\end{align}

\newpage

\textbf{Generalized Rank 14 Lorentz System:} For the following system we take parameters
$a = \frac{1}{\sqrt{2}}$, $R = 6.75r$ and $r = 45.92$. The equations defining the system are:
\begin{align}
    \frac{d\psi_{11}}{dt} &= -a\left(\frac{7}{3}\psi_{13}\psi_{22} + \frac{17}{6}\psi_{13}\psi_{24} + \frac{1}{3}\psi_{31}\psi_{22} + \frac{9}{2}\psi_{33}\psi_{24}\right) - 
    \sigma\frac{3}{2}\psi_{11} + \sigma a \frac{2}{3}\theta_{11} \label{rank_14_eq1} \\
    \frac{d\psi_{13}}{dt} &= a\left(-\frac{9}{19}\psi_{11}\psi_{22} + 
    \frac{33}{38}\psi_{11}\psi_{24} + \frac{2}{19}\psi_{31}\psi_{22} - \frac{125}{38}
    \psi_{31}\psi_{24}
    \right) - \sigma\frac{19}{2}\psi_{13} + \sigma a\frac{2}{19}\theta_{13} \label{rank_14_eq2} \\
    \frac{d\psi_{22}}{dt} &= a\left(\frac{4}{3}\psi_{11}\psi_{13} - \frac{2}{3}\psi_{11}\psi_{31} - \frac{4}{3}\psi_{13}\psi_{31} \right) - 
    6\sigma\psi_{22} + \frac{1}{3}\sigma a\theta_{22} \label{rank_14_eq3} \\
    \frac{d\psi_{31}}{dt} &= a\left(\frac{9}{11}\psi_{11}\psi_{22} + \frac{14}{11}
    \psi_{13}\psi_{22} + \frac{85}{22}\psi_{13}\psi_{24}\right) - \frac{11}{2}\sigma
    \psi_{31} + \frac{6}{11}\sigma a\theta_{31} \label{rank_14_eq4} \\
    \frac{d\psi_{33}}{dt} &= a\left(\frac{11}{6}\psi_{11}\psi_{24} \right) - 
    \frac{27}{2}\sigma\psi_{33} + \frac{2}{9}\sigma a\theta_{33} \label{rank_14_eq5} \\
    \frac{d\psi_{24}}{dt} &= a\left(-\frac{2}{9}\psi_{11}\psi_{13} - \psi_{11}\psi_{33} + \frac{5}{9}\psi_{13}\psi_{31} \right) - 18\sigma\psi_{24} + \frac{1}{9}\sigma a\theta_{24} \label{rank_14_eq6} \\
    \frac{d\theta_{11}}{dt} &= a\bigg{(}\psi_{11}\theta_{02} + \psi_{13}\theta_{22} - 
    \frac{1}{2}\psi_{13}\theta_{24} - \psi_{13}\theta_{02} + 2\psi_{13}\theta_{04} 
    + \psi_{22}\theta_{13} + \psi_{22}\theta_{31} + \psi_{31}\theta_{22} \label{rank_14_eq7}\\
    &\hspace{2cm} +
    \frac{3}{2}\psi_{33}\theta_{24} - \frac{1}{2}\psi_{24}\theta_{13} + 
    \frac{3}{2}\psi_{24}\theta_{33} 
    \bigg{)}
    + Ra\psi_{11} - \frac{3}{2}\theta_{11} \nonumber \\
    \frac{d\theta_{13}}{dt} &= a\bigg{(} 
    -\psi_{11}\theta_{22} + \frac{1}{2}\psi_{11}\theta_{24} - \psi_{11}\theta_{02} + 
    2\psi_{11}\theta_{04} - \psi_{22}\theta_{11} - 2\psi_{31}\theta_{22} \label{rank_14_eq8}\\ 
    &\hspace{2.5cm}+ 
    \frac{5}{2}\psi_{31}\theta_{24} + \frac{1}{2}\psi_{24}\theta_{11} + 
    \frac{5}{2}\psi_{24}\theta_{31}
    \bigg{)} + Ra\psi_{13} - \frac{19}{2}\theta_{13} \nonumber \\
    \frac{d\theta_{22}}{dt} &= a\bigg{(} 
    \psi_{11}\theta_{13} - \psi_{11}\theta_{31} - \psi_{13}\theta_{11} + 2\psi_{13}\theta_{31} + 4\psi_{22}\theta_{04} - \psi_{33}\theta_{11} + 
    2\psi_{24}\theta_{02}
    \bigg{)} + 2Ra\psi_{22} - 6\theta_{22} \label{rank_14_eq9} \\
     \frac{d\theta_{31}}{dt} &= a\bigg{(} 
    \psi_{11}\theta_{22} - 2\psi_{13}\theta_{22} + \frac{5}{2}\psi_{13}\theta_{24} - 
    \psi_{22}\theta_{11} + 2\psi_{22}\theta_{13} + 4\psi_{31}\theta_{02} - 
    4\psi_{33}\theta_{02} \label{rank_14_eq10} \\ 
    &\hspace{2cm}+ 8\psi_{33}\theta_{04} - \frac{5}{2}\psi_{24}\theta_{13}    
    \bigg{)} + 3Ra\psi_{31} - \frac{11}{2}\theta_{31} \nonumber \\
    \frac{d\theta_{33}}{dt} &= a\bigg{(}\frac{3}{2}\psi_{11}\theta_{24} - 
    4\psi_{31}\theta_{02} + 8\psi_{31}\theta_{04} - \frac{3}{2}\psi_{24}\theta_{11} 
    \bigg{)} + 3Ra\psi_{33} - \frac{27}{2}\theta_{33} \label{rank_14_eq11} \\
     \frac{d\theta_{24}}{dt} &= a\bigg{(}\frac{1}{2}\psi_{11}\theta_{13} - \frac{3}{2}\psi_{11}\theta_{33} + \frac{1}{2}\psi_{13}\theta_{11} - \frac{5}{2}\psi_{13}\theta_{31} - 2\psi_{22}\theta_{02}  \label{rank_14_eq12} \\
     &\hspace{3cm}- \frac{5}{2}\psi_{31}\theta_{13}
    - \frac{3}{2}\psi_{33}\theta_{11}
     \bigg{)} + 2Ra\psi_{24} - 18\theta_{24} \nonumber \\
     \frac{d\theta_{02}}{dt} &= a\bigg{(}-\frac{1}{2}\psi_{11}\theta_{11} + 
     \frac{1}{2}\psi_{11}\theta_{11} + \frac{1}{2}\psi_{11}\theta_{13} + 
     \frac{1}{2}\psi_{13}\theta_{11} + \psi_{22}\theta_{24} \label{rank_14_eq13} \\
     &\hspace{3cm}- 
     \frac{3}{2}\psi_{31}\theta_{31} + \frac{3}{2}\psi_{31}\theta_{33} + \frac{3}{2}\psi_{33}\theta_{31} + \psi_{24}\theta_{24}
     \bigg{)} - 4\theta_{02} \nonumber \\
     \frac{d\theta_{04}}{dt} &= -a\bigg{(}\psi_{11}\theta_{13} + \psi_{13}\theta_{11} + 2\psi_{22}\theta_{22} + 4\psi_{31}\theta_{33} + 4\psi_{33}\theta_{31}\bigg{)} - 
     16\theta_{04}  \label{rank_14_eq14} 
\end{align}

\begin{figure}
    \centering
    \includegraphics[width=1.\columnwidth]{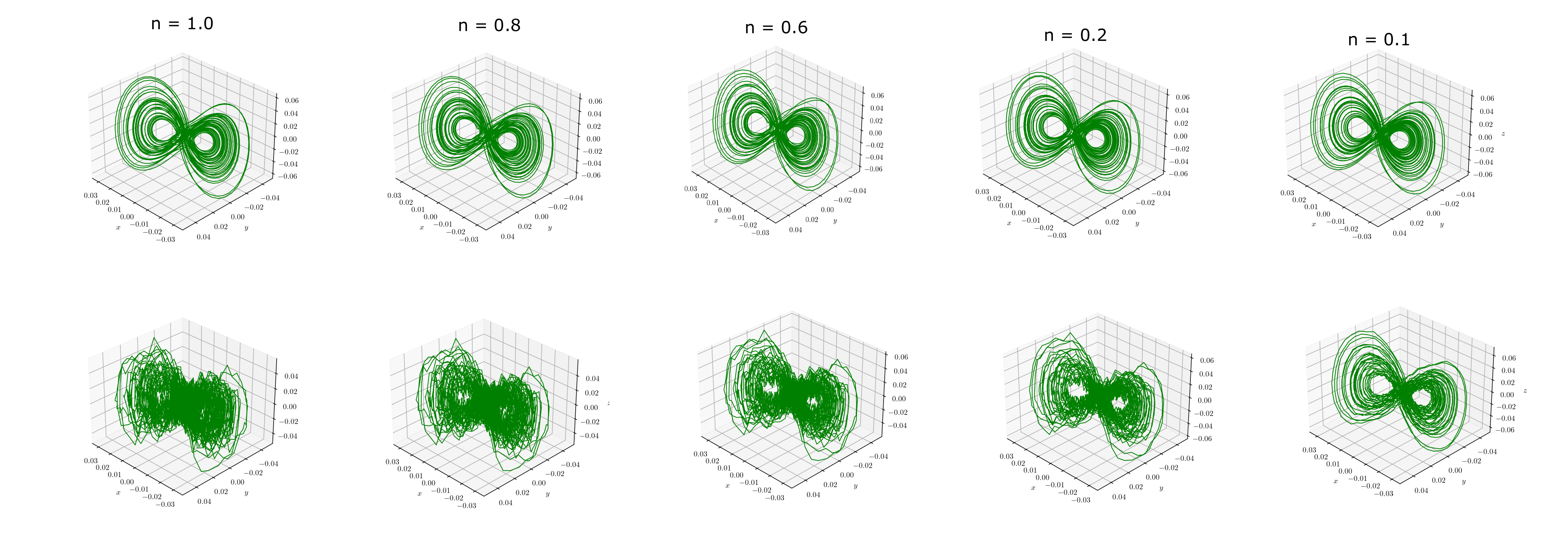}
    \caption{Robust recovery of dynamical systems from partial observations (Lorenz system). \textit{Top row:} coordinate network. \textit{Bottom row:} classical method.}
    \label{fig:diff_lorenz}
\end{figure}

\begin{figure}
    \centering
    \includegraphics[width=1.\columnwidth]{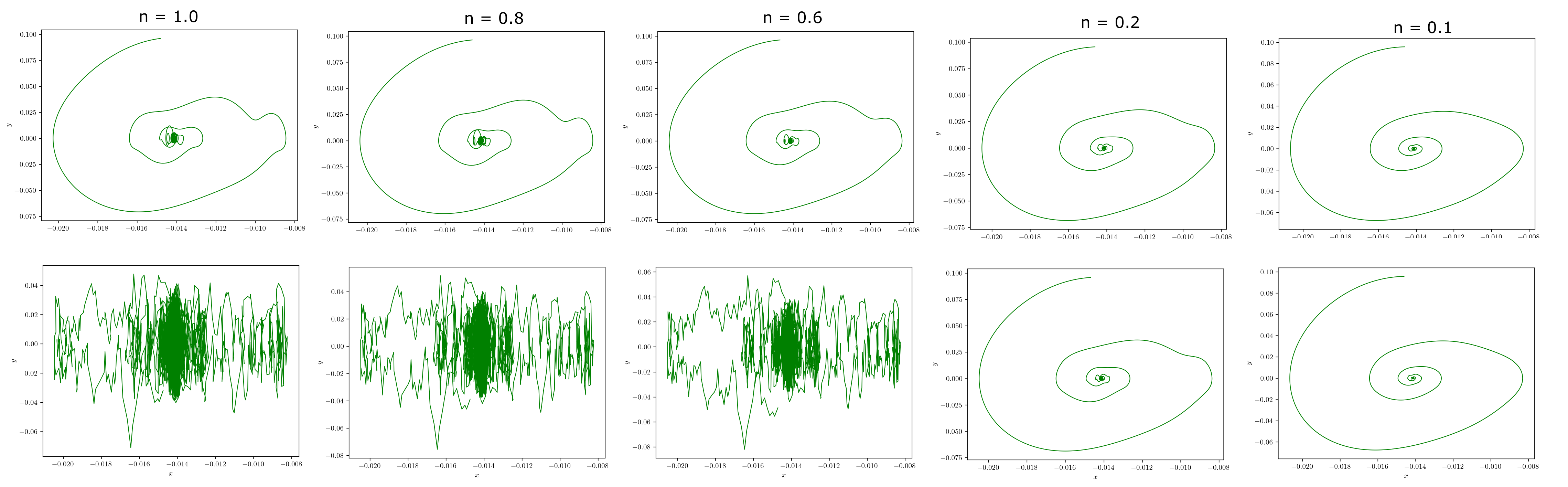}
    \caption{Robust recovery of dynamical systems from partial observations (Duffing system). \textit{Top row:} coordinate network. \textit{Bottom row:} classical method.}
    \label{fig:diff_duffing}
\end{figure}

\begin{figure}
    \centering
    \includegraphics[width=1.\columnwidth]{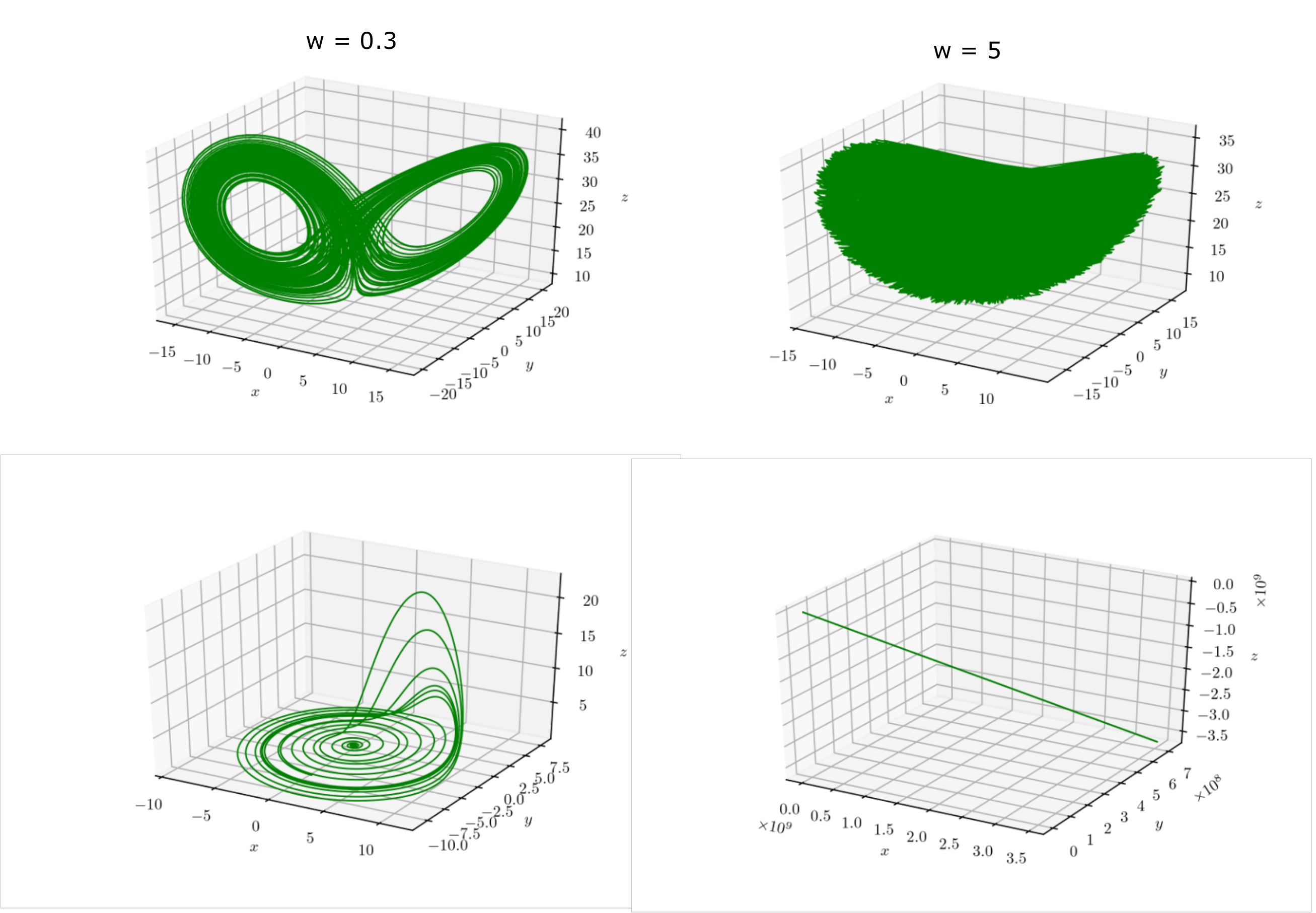}
    \caption{The top row and the bottom row depicts the SINDy reconstructions obtained for the Lorenz system and the Rossler system, respectively, using coordinate networks. As $\omega$ is increased in the $\mathrm{sinc}$ function, the coordinate network allows more higher frequencies to be captured, resulting in noisy reconstructions. }
    \label{fig:omega}
\end{figure}

\end{document}